\documentclass{article} 
\usepackage{iclr2024_conference,times}


\usepackage{amsmath,amsfonts,bm,amssymb,amsthm}

\usepackage{graphicx}
\usepackage{bbding}
\usepackage[ruled,linesnumbered]{algorithm2e}
\usepackage{multicol,multirow}
\usepackage{array}
\usepackage{enumerate}
\usepackage{booktabs}
\usepackage{float}
\usepackage{wrapfig}
\usepackage{caption}
\usepackage{enumitem}









\def\eqref#1{equation~\ref{#1}}









\def\1{\bm{1}}










\DeclareMathAlphabet{\mathsfit}{\encodingdefault}{\sfdefault}{m}{sl}
\SetMathAlphabet{\mathsfit}{bold}{\encodingdefault}{\sfdefault}{bx}{n}













\DeclareMathOperator*{\argmax}{arg\,max}

\newcommand{\abs}[1]{\left\vert#1\right\vert}
\newcommand{\set}[1]{\left\{#1\right\}}

\newcommand{\vct}{\text{vec}}

\newcommand{\mbc}{\mathbf{C}}
\newcommand{\mbg}{\mathbb{G}}

\newcommand{\mbx}{\mathbb{X}}

\newcommand{\mca}{\mathcal{A}}
\newcommand{\mcc}{\mathcal{C}}
\newcommand{\mce}{\mathcal{E}}
\newcommand{\mcg}{\mathcal{G}}

\newcommand{\mcs}{\mathcal{S}}
\newcommand{\mco}{\mathcal{O}}
\newcommand{\mcv}{\mathcal{V}}
\newcommand{\mcf}{\mathcal{F}}
\newcommand{\mcl}{\mathcal{L}}
\newcommand{\mci}{\mathcal{I}}

\newcommand{\mbfa}{\mathbf{A}}

\newcommand{\mbfc}{\mathbf{C}}
\newcommand{\mbfe}{\mathbf{E}}
\newcommand{\mbfk}{\mathbf{K}}
\newcommand{\mbfp}{\mathbf{P}}

\newcommand{\mbfx}{\mathbf{X}}

\newcommand{\mbff}{\mathbf{F}}
\newcommand{\hmbfc}{\hat{\mathbf{C}}}
\newcommand{\hc}{\hat{c}}

\newcommand{\eg}{e.g.}

\newcommand{\name}{M3C}
\newcommand{\namenn}{UM3C}

\newcommand{\KNN}{M3C-Local}
\newcommand{\threshold}{M3C-Global}
\newcommand{\Ranksum}{M3C-Fuse}

\newcommand{\ournn}{\textbf{UM3C}}

\newcommand{\yanq}[1]{{\color{black}{#1}}}

\newtheorem{theorem}{Theorem}
\newtheorem{proposition}{Proposition}[section]

\newtheorem{definition}[theorem]{Definition}


\usepackage{url}
\usepackage{makecell}

\usepackage[colorlinks,
            linkcolor=red,
            anchorcolor=blue,
            citecolor=brown,
            ]{hyperref}
\iclrfinalcopy

\title{M3C: A Framework towards Convergent, Flexible, and Unsupervised Learning of Mixture Graph Matching and Clustering}


\author{Jiaxin Lu$^\ast$\quad Zetian Jiang\thanks{Equal Contribution}\quad Tianzhe Wang\quad Junchi Yan \\
Department of Computer Science\\
Shanghai Jiao Tong University\\
 \texttt{\{lujiaxin,maple\_jzt,usedtobe,yanjunchi\}@sjtu.edu.cn} \\
}

%

\begin{document}

\maketitle

\begin{abstract}
Existing graph matching methods typically assume that there are similar structures between graphs and they are matchable. However, these assumptions do not align with real-world applications. This work addresses a more realistic scenario where graphs exhibit diverse modes, requiring graph grouping before or along with matching, a task termed mixture graph matching and clustering. We introduce Minorize-Maximization Matching and Clustering (M3C), a learning-free algorithm that guarantees theoretical convergence through the Minorize-Maximization framework and offers enhanced flexibility via relaxed clustering. Building on M3C, we develop UM3C, an unsupervised model that incorporates novel edge-wise affinity learning and pseudo label selection. Extensive experimental results on public benchmarks demonstrate that our method outperforms state-of-the-art graph matching and mixture graph matching and clustering approaches in both accuracy and efficiency. Source code will be made publicly available.
\end{abstract}
\section{Introduction}
\label{sec:intro}
Graph matching (GM) constitutes a pervasive problem in computer vision and pattern recognition, with applications in image registration~\citep{shen2002hammer}, recognition~\citep{duan2012discovering,demirci2006object}, stereo~\citep{goesele2007multi}, 3D shape matching~\citep{berg2005shape,petterson2009exponential}, and structure from motion~\citep{simon2007scene}. 
GM involves finding node correspondences between graphs by maximizing affinity scores, commonly formulated as the quadratic assignment problem (QAP), often referred to as Lawler's QAP~\citep{LoiolaEJOR07}:
\begin{equation}
\label{eq:gm_formulation}
\begin{aligned}
    \mbfx = \argmax_\mbfx \vct(\mbfx)^{\top} \mbfk \vct(\mbfx)  \quad
    s.t. \hspace{5pt} \mbfx &\in \{0,1\}^{n_1 \times n_2}, \mbfx \mathbf{1}_{n_2} \leq \mathbf{1}_{n_1}, \mbfx^{\top} \mathbf{1}_{n_1} \leq \mathbf{1}_{n_2}
\end{aligned}
\end{equation}
Here, $\mbfx$ is a permutation matrix encoding node-to-node correspondences, and $\bm{1}_n$ is an all-one vector.
The inequality constraints accommodate scenarios with outliers, addressing the general and potentially ambiguous nature of the problem.
Multiple graph matching (MGM)~\citep{yan2015consistency,YanPAMI16,JiangPAMI21} extends GM by enforcing cycle consistency among pairwise matching results. 
GM and MGM are both NP-hard, leading to the proposal of approximate algorithms, either learning-free~\citep{YanICMR16} or learning-based~\citep{YanIJCAI20}.

In GM, whether in two-graph matching or multi-graph matching, a common assumption prevails: the graphs must belong to the same category, and labels of both graphs and nodes are required. However, labeling can be costly, especially in domains like molecular design or drug discovery where domain-specific knowledge is needed. Real-world scenarios also involve mixtures of different graph types, e.g., in traffic tracing, frames may contain people, bicycles, and cars simultaneously. As a result, matching with mixed graph types is a practical challenge in its nascent stage. In this paper, we introduce Mixture Graph Matching and Clustering (MGMC), aiming to align graph-structured data and simultaneously partition them into clusters, as shown in Fig.~\ref{fig:m3c_um3c}.

Recent studies have explored mixture graph matching and clustering in two works: Decayed Pairwise Matching Composition (DPMC)~\citep{WangAAAI20} and the graduated assignment neural network (GANN)~\citep{wang2020graduated}. However, they suffer from certain drawbacks that warrant attention: 1) \textbf{Convergence}. DPMC exhibits convergence instability, while GANN has slow convergence. 2) \textbf{Rigid Structure}. DPMC relies heavily on its tree structure, and GANN tends to converge to a sub-optimal due to its hard clustering nature. 3) \textbf{Robustness}. GANN's robustness is compromised by noise, as shown in our experiments, where matching accuracy drops significantly with just two outliers and further deteriorates with more outliers.

To address these drawbacks, we propose our solution, \textbf{M}inorize \textbf{M}aximization \textbf{M}atching and \textbf{C}lustering (\textbf{\name}). \name\ enjoys convergence guarantees and is based on a convergent alternating optimization solver. 
We utilize the cluster indicator from hard clustering to represent the discrete structure used for optimization, providing better information for graphs of different modes while preserving the convergence guarantee of the algorithm.
We additionally introduce \ournn, which integrates the learning-free solver into an unsupervised pipeline, incorporating edge-wise affinity learning, affinity loss, and a pseudo-label selection scheme for higher robustness. A comprehensive comparison with previous works is summarized in the appendix Sec.~\ref{sec:comparison_with_previous_works}.

The main contributions of this paper are:
\begin{itemize}[leftmargin=*,itemsep=2pt,topsep=0pt,parsep=0pt]
    \item We present \name, a learning-free solver for the mixture graph matching and clustering problem, that guarantees convergence within the Minorize-Maximization framework, enhanced by a flexible optimization scheme enabled by the relaxed cluster indicator. This marks the first theoretically convergent algorithm for MGMC, to the best of our knowledge.
    
    \item We enhance \name \ by integrating it with an unsupervised pipeline \namenn. Edge-wise affinity learning, affinity loss, and pseudo label selection are introduced for learning quality and robustness.
    
    \item \name \ and \namenn \ outperform state-of-the-art learning-free and learning-based methods on mixture graph matching and clustering experiments. Interestingly, \namenn \ even outperforms supervised models such as BBGM and NGM, establishing itself as the top-performing method for MGMC on the utilized public benchmarks.
\end{itemize}

\section{Related Works}
\label{sec:related}

\paragraph{Graph Matching}
Graph matching has gained attention recently, with various techniques explored, including spectrum, semi-definite programming (SDP), and dual decomposition~\citep{Gold1996AGA,WykPAMI04,Cho2010ReweightedRW,tian2012convergence,egozi2012probabilistic,swoboda2017study}. 
Multiple graph matching (MGM) introduces cycle consistency as regularization to encourage matching transitivity, whose methods fall into two categories: matrix factorization-based~\citep{KimSiggraph12,PachauriNIPS13,HuangSGP13,ZhouICCV15,chen2014near,LeonardosICRA17,HuCVPR18,SwobodaCVPR19} and supergraph-based approaches~\citep{YanPAMI16,yan2015consistency,JiangPAMI21,WangAAAI20}. The former enforces cycle consistency through matrix factorization, connecting all graphs with a universe-like graph for global consistency. The latter iteratively updates two-graph matchings by considering metrics along the supergraph path. Recent studies explore deep learning methods, using CNN and GNN for feature extraction and learning-free or neural network solvers for matching~\citep{ZanfirCVPR18,WangICCV19,Wang2019Neural,WangPAMI20,rolinek2020deep,yu2021deep,wang2020graduated,liu2020partial}, covering both supervised and unsupervised learning pipelines.

\paragraph{Graph Clustering} 
In this paper, we tackle graph clustering, which aims to group similar graphs. One approach involves embedding each graph into a Hilbert space and using clustering methods like k-means. Previous work~\citep{WangAAAI20,wang2020graduated} commonly use Spectral Cluster~\citep{ng2002spectral} based on pairwise affinity scores. Another approach~\citep{hartmanis1982computers,poljak1995solving,trevisan2012max,goemans1995improved} utilizes max cut~\citep{de2001randomized, festa2002randomized,poljak1995solving,trevisan2012max,goemans1995improved}, treating input graphs as nodes in a supergraph and assigning weights to edges based on pairwise scores. Alternative formulations for graph clustering include min cut formulation~\citep{johnson1993min}, normalized cuts~\citep{xu2009fast}, and multi-cuts~\citep{kappes2016multicuts,swoboda2017message}.

\paragraph{Mixture Graph Matching and Clustering} 
Matching with mixtures of graphs entails finding node correspondence and partitioning graphs into clusters. Previous works, DPMC~\citep{WangAAAI20} and GANN~\citep{wang2020graduated}, have explored this setting. GANN introduces GA-MGMC, a graduated assignment-based algorithm optimizing the MGMC problem in a continuous space, followed by projecting results to discrete matching. DPMC, a learning-free solver, constructs a maximum spanning tree on the supergraph and updates matching along the tree. Another work~\citep{BaiWSDM19} embeds the input graph for matching into an embedding vector for graph clustering. Joint matching and node-level clustering are explored~\citep{krahn2021convex}, solving node correspondence and segmenting input graphs into sub-graphs.
This paper focuses on the joint graph matching and clustering problem, a relatively new area in the literature. It represents a significant advancement in GM solvers for more open environments.

\begin{figure}[tb!]
    \begin{center}
        \includegraphics[width=\linewidth]{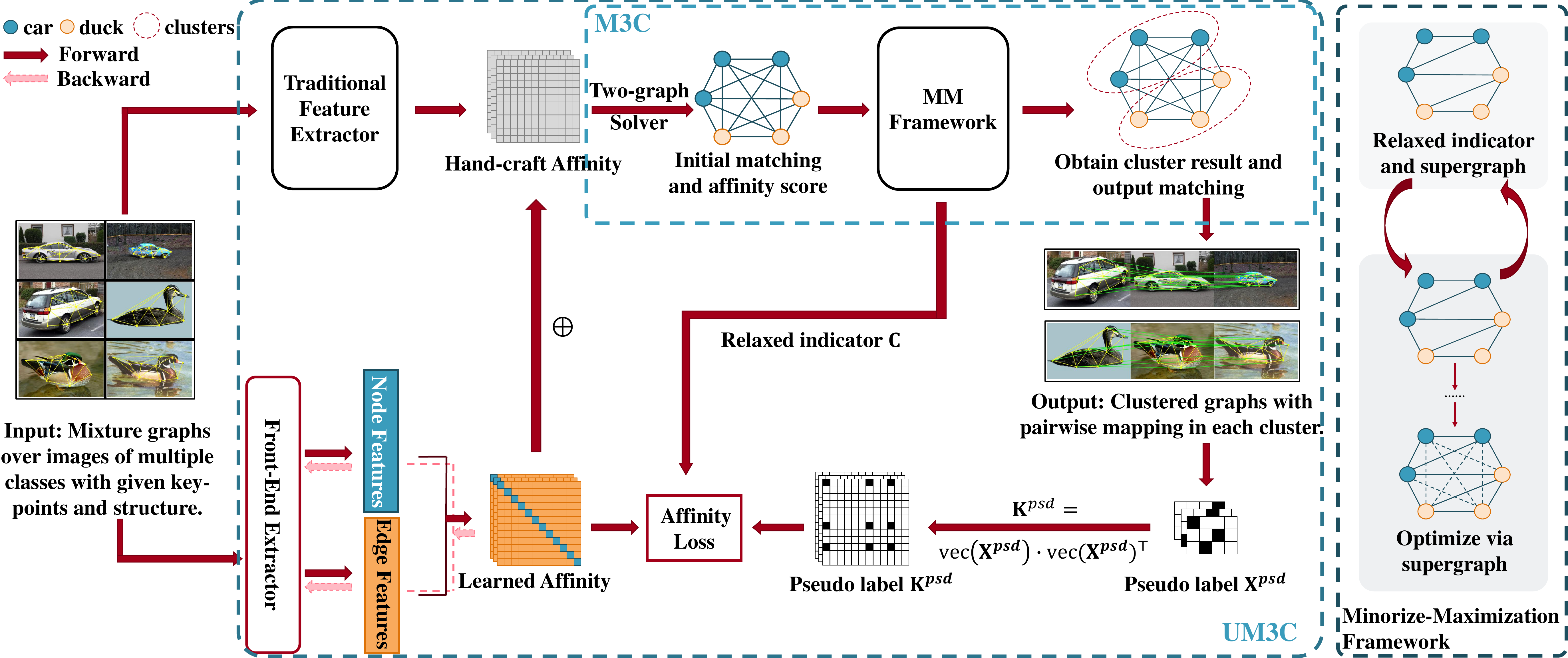}
    \end{center}
    \vspace{-5pt}
    \caption{Pipeline of \name\ and its unsupervised learning extension called \namenn. We use two clusters: ducks and cars as an illustration example. The node in the plot refers to the graph sample for matching and different colors refer to different clusters. \name \ is a leaning free solver as shown in the dotted box in the top middle with a minorize-maximization framework built on a relaxed indicator (see Section~\ref{sec:m3c}). It is extended to an unsupervised pipeline (see Section~\ref{sec:um3c}) by learning the edge-wise affinity matrix $\mbfk$ and a pseudo label selection scheme.}
    \label{fig:m3c_um3c}
    \vspace{-15pt}
\end{figure}

\section{Background and Problem Formulation} \label{sec:Preliminaries}
To facilitate the introduction of our methods, \name\ and \namenn, we introduce some preliminary concepts, definitions, and the proposed problem formulation in this section.

\begin{definition}[\textbf{Matching composition}]
Matching composition involves combining pairwise matching results to enhance the initial matching configuration: $\mbfx^{t+1}=\mbfx^{t}_{i k_1}\mbfx^{t}_{k_1k_2}\ldots \mbfx^{t}_{k_s j}$.
We further define the matching composition space of $\mcg_i$ and $\mcg_j$ to encompass all possible compositions between them
$\mbfp_{\mbx}(i,j) = \{ \mbfx_{i k_1} \dots \mbfx_{k_s j} | s \in \mathbb{N}^{+}, 1 \leq  k_1 \dots k_s \leq N \}$.
\end{definition}

\begin{definition}[\textbf{Supergraph}]
\label{def:supergraph}
Supergraph is a common protocol for describing multi-graph matching. The supergraph $\mcs=\set{\mcv, \mce, \mbfa}$ consists of vertices corresponding to graphs $\mcv = \set{\mcg_1, \dots , \mcg_{N}}$ and edges weighted by pairwise matching affinity scores $\mbfx_{ij}$, with adjacency $\mbfa\in \set{0,1}^{N\times N}$.

Each path on the supergraph corresponds to a matching composition. The weight of the path from $\mcg_{i}$ to $\mcg_{j}$ is defined as the affinity score of the matching composition:
  $\mbfx_{i j} = \mbfx_{i k_1} \mbfx_{k_1 k_2} \ldots \mbfx_{k_s j}$.
The matching composition space of $\mcg_i$ and $\mcg_j$ can be represented as all the paths from $\mcg_i$ to $\mcg_j$ on the supergraph:
$ \mbfp_{\mbfa}(i,j) = \{ \mbfx_{i k_1}  \dots \mbfx_{k_s j} | \forall a_{ik_1} = \dots = a_{k_sj} = 1 \}$.
\end{definition}

\begin{definition}[\textbf{Cluster indicator}]  \label{def:cluster_indicator}
The cluster indicator is defined to describe whether two graphs belong to the same cluster. It is represented by the cluster indicator matrix $\mbfc \in \set{0, 1}^{N \times N}$, where $c_{ij}=\bm{1}$ denotes ${\mcg_{i}, \mcg_{j}}$ are of the same class.
The transitive relation $c_{ij} c_{jk} \leq c_{ik}$ serves as the sufficient and necessary condition for $\mbfc$ to be a strict cluster division (see proof in the appendix). The number of clusters can be determined by the number of \textbf{strongly connected components} of $\mbfc$, denoted as $\texttt{SCC}(\mbfc)$.
\end{definition}

\paragraph{Problem Formulation}
The MGMC problem can be formulated as a joint optimization problem, where matching results maximize pair similarity to facilitate clustering, while the cluster indicator guides matching optimization in turn.
Given the set of pairwise affinity and number of clusters $N_c$, the overall objective $\mcf(\mbx, \mbfc)$ for joint matching and clustering can be written as follows:
\begin{equation}
    \begin{aligned}
        & \max_{\mathbb{X}, \mathbf{C}} \mcf(\mbx, \mbfc)=\max_{\mathbb{X}, \mathbf{C}} \frac{\sum_{ij} c_{ij} \cdot \vct(\mbfx_{ij})^{\top} \mbfk_{ij} \vct(\mbfx_{ij})}{\sum_{ij} c_{ij}} \\
        s.t. \hspace{5pt} & \mbfx_{ij} \mathbf{1}_{n_j} \leq \mathbf{1}_{n_i}, \mbfx_{ij}^{\top} \mathbf{1}_{n_i} \leq \mathbf{1}_{n_j}, \quad
         c_{ik} c_{kj} \leq c_{ij}, \forall i,j,k, \hspace{2pt} \texttt{SCC}(\mbfc) = N_c.
    \end{aligned}
    \label{eq:mgmc}
\end{equation}
where $\mbx$ represents the pairwise matching matrices, and $\mbfc$ denotes the cluster indicator matrix. The first part of constraints ensures that $\mbfx_{ij}\in\set{0,1}^{n_i\times n_j}$ is a (partial) permutation matrix, and the second part requires $\mbfc\in \set{0,1}^{N\times N}$ to be a strict cluster division with $N_c$ clusters. The term $\frac{1}{\sum_{ij} c_{ij}}$ acts as a normalization factor to mitigate the influence of cluster scale. The cycle consistency within each cluster is either enforced as a constraint $ \mbfx_{ik} \mbfx_{kj} \leq \mbfx_{ij}, \ \forall c_{ik} = c_{kj} = c_{ij} = 1$, or softly encouraged by the algorithm.

\section{A Learning-free Approach: M3C}
\label{sec:m3c}

In this section, we present our learning-free algorithm, M3C. We start by converting the original problem into a Minorize-Maximization (MM) framework, a nontrivial achievement not realized before (Sec.~\ref{sec:mm_framework}). Additionally, we propose a relaxed indicator that allows for more flexible exploration by relaxing the hard constraints from independent clusters to the global and local rank of affinities (Sec.~\ref{sec:relaxed_indicator}). We finally present the full algorithm (Sec.~\ref{sec:our}).

\subsection{Converting the Problem Solving into a Minorize-Maximization Framework} \label{sec:mm_framework}

Recall that DPMC~\citep{WangAAAI20} device an alternative system for matching and clustering, suffers non-convergence. To utilize the mutual optimization nature and guarantee convergence, we introduce a Minorize-Maximization (MM) framework.

We start by presenting a new objective with a single variable $\mbx$, denoted as $f(\mbx) = \mcf(\mbx, h(\mbx))$, to incorporate the MM framework into our approach. Here, $h(\mbx) = \argmax_{\mbfc}  \mcf(\mbx, \mbfc)$, represents the optimal cluster division for $\mbx$.

The MM framework works by finding a surrogate function $g(\mbx | \mbx_0) = \mcf(\mbx, h(\mbx_0))$, which minorizes the original objective function $f(\mbx)$. By optimizing this surrogate function, we can iteratively improve the objective or maintain its value. The iterative steps are as follows:
\begin{itemize}[leftmargin=*,itemsep=2pt,topsep=0pt,parsep=0pt]
\item 
Construct the surrogate function $g(\mbx | \mbx^{(t)}) = \mcf(\mbx, h(\mbx^{(t)}))$ by inferring the best cluster based on the current matching results $\mbx^{(t)}$.
\item 
Maximize $g(\mbx | \mbx^{(t)})$ instead of $f(\mbx)$, which can be solved using graph matching solvers.
\end{itemize}
The above iteration guarantees that $f(\mbx)$ is monotonic incremental and thus will converge: 
\begin{equation}\label{eq:f_monotonic}
    f(\mbx^{(t+1)}) \ge g(\mbx^{(t+1)} | \mbx^{(t)}) \ge g(\mbx^{(t)} | \mbx^{(t)}) = f(\mbx^{(t)}).
\end{equation}
Details of the proof can be found in Sec.~\ref{sec:proof_conv_mm}.

\subsection{Relaxation on Cluster Indicator}\label{sec:relaxed_indicator}
The proposed framework benefits from a convergence guarantee. However, subsequent theoretical analyses indicate that hard clustering tends to converge rapidly to a sub-optimal solution. A similar challenge is faced by GANN~\citep{wang2020graduated}, which overlooks the intrinsic differences between matched pairs and assigns them clustering weights of either $1$ or a constant $\beta$. Owing to its inherent hard clustering characteristic, GANN's performance exhibits high sensitivity to both parameter fine-tuning and the presence of outliers. This underscores our motivation to relax the hard constraints.

\begin{proposition}\label{prop:cluster}
If the size of each cluster is fixed, the hard cluster indicator converges to the local optimum in one step:
\begin{equation}
\begin{aligned}
& \mbfc^{(t)} = \mbfc^{(t+1)},  \text{if } \{N^{(t)}_{g_1}, \ldots, N^{(t)}_{g_{N_c}}\} = \{N^{(t+1)}_{g_1}, \ldots, N^{(t+1)}_{g_{N_c}}\}. 
\end{aligned}
\end{equation} 
\vspace{-15pt}
\end{proposition}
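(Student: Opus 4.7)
The plan is to leverage the structure of the MM iteration under the size-preservation hypothesis. Since the multiset of cluster sizes $\{N^{(t)}_{g_k}\}$ is preserved between iterations $t$ and $t+1$, the denominator $\sum_{ij} c_{ij} = \sum_k (N^{(t)}_{g_k})^2$ of the objective in Eq.~\ref{eq:mgmc} is a constant. Hence the clustering update $\mbfc^{(t+1)} = h(\mbx^{(t+1)}) = \argmax_{\mbfc} \mcf(\mbx^{(t+1)}, \mbfc)$ reduces to the combinatorial problem of maximizing the within-cluster affinity sum $\sum_{ij} c_{ij}\, s_{ij}(\mbx^{(t+1)})$ over partitions with the prescribed size profile, where $s_{ij}(\mbx) = \vct(\mbfx_{ij})^\top \mbfk_{ij} \vct(\mbfx_{ij})$. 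The goal is to show that this restricted maximizer is precisely $\mbfc^{(t)}$, from which $\mbfc^{(t+1)} = \mbfc^{(t)}$ and the one-step convergence follow.

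First, I would recall that the matching step produces $\mbx^{(t+1)} = \argmax_{\mbx} \sum_{ij} c^{(t)}_{ij} s_{ij}(\mbx)$ subject to cycle consistency inside each cluster of $\mbfc^{(t)}$, so within-cluster entries $s_{ij}(\mbx^{(t+1)})$ are at their joint maximum under $\mbfc^{(t)}$, while cross-cluster entries arise only through matching composition along supergraph paths (Def.~\ref{def:supergraph}) and are never the target of direct optimization. Second, I would apply a swap argument: any alternative partition $\mbfc' \neq \mbfc^{(t)}$ with the same size profile can be reached from $\mbfc^{(t)}$ by a sequence of pairwise element exchanges between clusters, and each exchange replaces a group of within-cluster (highly optimized) terms with cross-cluster (composition-constrained) terms. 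Summing the replacements yields $\sum_{ij} c'_{ij} s_{ij}(\mbx^{(t+1)}) \le \sum_{ij} c^{(t)}_{ij} s_{ij}(\mbx^{(t+1)})$, which identifies $\mbfc^{(t)}$ as the maximizer; in the generic non-degenerate case the inequality is strict, giving uniqueness and hence $\mbfc^{(t+1)} = \mbfc^{(t)}$.

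The main obstacle will be to formalize the claim that cross-cluster affinities are dominated by within-cluster affinities after the matching step, because the MM update only jointly maximizes a cluster-wise sum and does not a priori provide an entry-wise ordering. I expect to resolve this by exploiting two observations: cross-cluster pairs $(i,j)$ do not appear in the matching objective at step $t$, so the composed $\mbfx^{(t+1)}_{ij}$ obtained along supergraph paths is a constrained candidate whose affinity cannot exceed the unconstrained pairwise QAP maximum achievable when $(i,j)$ is treated as within-cluster; and the strongly-connected-component structure of $\mbfc^{(t)}$ cleanly decomposes the total affinity budget into within- and across-cluster contributions, enabling the swap-by-swap accounting. A residual subtlety is the treatment of ties in the size-constrained clustering subproblem, which I would handle by adopting the natural algorithmic convention of retaining the current $\mbfc^{(t)}$ when multiple size-preserving optima exist.
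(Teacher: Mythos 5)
Your setup is right on two counts: with the size profile fixed, the normalization $\sum_{ij}c_{ij}$ is identical for both partitions, so the clustering update reduces to maximizing $\sum_{ij}c_{ij}J_{ij}(\mbx^{(t+1)})$ over size-preserving partitions, and ties must indeed be broken by retaining the current indicator. But the core of your argument --- the swap argument resting on an entry-wise domination of within-cluster affinities over cross-cluster affinities after the matching step --- has a genuine gap, which you flag yourself but do not resolve. Neither of your two observations closes it: the bound ``a composed cross-cluster matching cannot exceed the unconstrained pairwise QAP optimum for that pair'' compares $J_{ij}$ only against the best value achievable for the \emph{same} pair $(i,j)$, and says nothing about how it compares to $J_{kl}$ for a \emph{different} pair; affinity scores of distinct pairs are not on a common scale, and two graphs placed in different clusters at step $t$ may well have a larger pairwise affinity than some within-cluster pair. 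So the swap-by-swap accounting can fail. More fundamentally, no argument that certifies $\mbfc^{(t)}$ as the maximizer at $\mbx^{(t+1)}$ ``from scratch'' can succeed, because the only reason it is the maximizer is a piece of information you never invoke.

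The missing ingredient is the optimality of $\mbfc^{(t)}$ for the \emph{previous} iterate, $\mbfc^{(t)}=h(\mbx^{(t)})$. The paper's proof is a transfer argument by contradiction: suppose $\mbfc^{(t+1)}\neq\mbfc^{(t)}$, so that $\sum_{ij}c^{(t+1)}_{ij}J^{(t+1)}_{ij}\ge\sum_{ij}c^{(t)}_{ij}J^{(t+1)}_{ij}$ after cancelling the common normalization. The maximization step updates each pair independently and only touches pairs with $c^{(t)}_{ij}=1$, hence $J^{(t+1)}_{ij}=J^{(t)}_{ij}$ whenever $c^{(t)}_{ij}=0$ and $J^{(t+1)}_{ij}\ge J^{(t)}_{ij}$ whenever $c^{(t)}_{ij}=1$. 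Substituting these into the inequality (the newly gained pairs have unchanged scores; the dropped pairs have scores that only increased from $t$ to $t+1$) yields $\sum_{ij}c^{(t+1)}_{ij}J^{(t)}_{ij}\ge\sum_{ij}c^{(t)}_{ij}J^{(t)}_{ij}$, contradicting the (tie-broken) optimality of $\mbfc^{(t)}$ for $\mbx^{(t)}$. Note that this uses exactly the two monotonicity facts you gestured at, but deploys them to pull the comparison back to time $t$ rather than to order individual entries at time $t+1$. Replacing your swap argument with this transfer step makes the proof go through.
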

Please refer to Sec.~\ref{sec:app:hard_cluster} for the detailed proof. Two key observations are: First, the cluster indicator attains a local optimum for each cluster size group. Second, it converges within one optimization step when the target cluster size is known. Such convergence fixes the optimization space for finding optimal matching compositions, constraining the exploration of matching results.

To overcome the constrained exploration, we propose relaxing the hard constraints on the original cluster indicator. We present two relaxations focusing on the number of graph pairs for the new indicator $\hmbfc$:
\begin{equation}
\begin{aligned}
        \sum_{ij} \hc_{ij} = r \cdot N^2  \ (\text{global constraint}); \quad
        \sum_{j} \hc_{ij} = r \cdot N, \forall i  \ (\text{local constraint})
\end{aligned}
\end{equation}
Here, $r \in [0, 1]$ is a hyper-parameter that adjusts the ratio of chosen pairs. 

\begin{wrapfigure}{r}{0.45\linewidth}
\vspace{-15pt}
\begin{center}
\includegraphics[width=\linewidth]{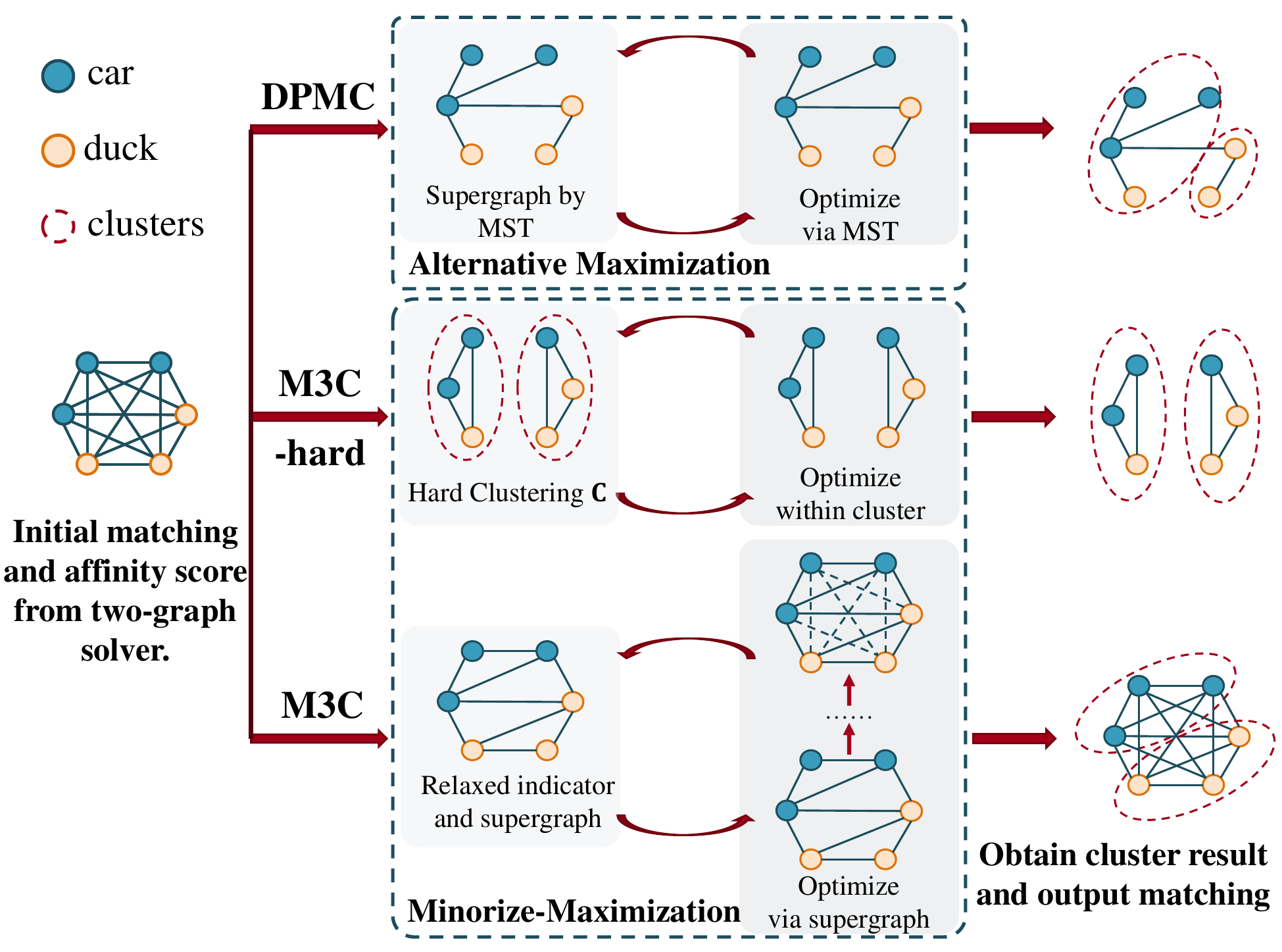}
\end{center}
\vspace{-10pt}
\caption{Three optimization structures. \textbf{DPMC}: rigid tree structure, no convergence guarantee. \textbf{M3C-Hard}: constrained exploration. \textbf{M3C}: relaxed indicator, better solutions with convergence.}
\vspace{-10pt}
\label{fig:compare}
\end{wrapfigure}
The global version limits the total number of selected graph pairs, while the local version restricts pair numbers for each graph. Both relaxations maintain the essence of the hard constraint, allowing a more flexible structure and enabling the exploration of additional matching information. We refer to this new approach as the \textbf{relaxed indicator}, which provides three advantages:
1) The relaxed indicator assesses each graph pair individually, disregarding cluster numbers and transitive relations $c_{ik} c_{kj} \leq c_{ij}$, offering flexibility in exploring matching information.
2) It enhances pseudo label selection in unsupervised learning by selecting graph pairs with higher affinity scores, as discussed in Section~\ref{sec:um3c}.
3) Compared to GANN~\citep{wang2020graduated}, which also proposes a relaxation for $\mbfc$, our approach maintains discrete constraints, resulting in faster convergence than GA-MGMC.

\subsection{The Cluster Indicator Relaxed Algorithm: \name}\label{sec:our}
We introduce the \name\ algorithm, which leverages the relaxed indicator $\hmbfc$ within the MM framework, consisting of three key parts: initialization, surrogate function construction, and maximization step. The details are outlined in Alg.~\ref{alg:main} in the appendix.

\paragraph{Initialization} 
We obtain the initial matching $\mbx^{(0)}$ using a two-graph solver, such as RRWM~\citep{Cho2010ReweightedRW}, which is both cost-effective and aligns with existing literature on multiple graph matching~\citep{YanPAMI16,JiangPAMI21}.

\paragraph{Surrogate Function Construction}
\label{sec:surrogate_construction}
With the introduction of relaxed constraints in Section~\ref{sec:relaxed_indicator}, we first present two methods for solving the optimal relaxed indicator $\hmbfc$ using the so-called global and local constraints, respectively.
\begin{equation}
    \begin{aligned}
        \hmbfc^{(t)} &= \hat{h}(\mbx^{(t-1)}) 
        = \argmax_{\hmbfc} \frac{\sum_{ij} c_{ij}}{r N^2}   \vct(\mbfx_{ij}^{(t-1)})^{\top} \mbfk_{ij} \vct(\mbfx_{ij}^{(t-1)}) \\
        s.t. &\sum_{ij} c_{ij} =  rN^2 \text{ (global constraint) or }
         \sum_{j} c_{ij} = r  N, \forall i \text{ (local constraint)}.
    \end{aligned}
    \label{eq:relax_cluster_step}
\end{equation}
In Eq.~\ref{eq:relax_cluster_step}, given $\mbx^{(t-1)}$ and $r$, the affinity scores and the normalization coefficients remain fixed. The objective is to select graph pairs with higher affinity scores. For the global constraint, we rank all graph pairs by affinity and set a threshold at the highest $r N^2$ values. For the local constraint, we rank the neighbors of each graph and select the top $rN$ pairs in the context of k-nearest-neighbor. These two algorithms are denoted as \textbf{global-rank} and \textbf{local-rank}, respectively.

Building upon the local and global schemes, we propose a fused approach named \textbf{fuse-rank}. It combines both global and local constraints by introducing a bi-level ranking. The \textbf{local} rank of a pair of graphs $(\mcg_u,\mcg_v)$ is determined as: $R_{uv} = i + j$, where $\mcg_u$ is the $i$-th nearest neighbor of $\mcg_v$, and $\mcg_v$ is the $j$-th nearest neighbor of $\mcg_u$. Subsequently, we establish a \textbf{global} threshold across all graph pairs based on $\set{R_{uv}}$. This approach allows the induced relaxed indicator $\hmbfc$ to reflect both local and global affinity relationships among graphs. We will evaluate the performance of the above three strategies in our experiments.

\paragraph{Surrogate Function Maximization}
In hard clustering, the maximization step optimizes within clusters, which is trivial to apply MGM algorithms. However, with the relaxed indicator, the optimization structure changes and requires a modification of the algorithm. We first present our approach, and then resonate why it is a well-defined generalization to address this issue.

With the supergraph mentioned in Def.~\ref{def:supergraph} and the relaxed indicator, we can construct an incomplete supergraph with edges connecting graphs of the same class. The adjacency $\mbfa$ of this supergraph corresponds numerically to the relaxed indicator $\hmbfc$, thus we simply set $\mbfa = \hmbfc$.
Let $\mbfp_{\mbfa^{(t)}}$ denote all the paths (matching composition space) based on adjacency $\mbfa$, different from traditional MGM problem, the optimization step becomes:
\begin{equation}
    \mathbb{X}^{(t)} = \argmax_{\mbfx_{ij} \in \mbfp_{\mbfa^{(t)}}(i,j)} \sum_{ij} \vct(\mbfx_{ij})^{\top} \mbfk_{ij} \vct(\mbfx_{ij}) 
    \label{eq:relax_matching_step}
\end{equation}
In Eq.\ref{eq:relax_matching_step}, the optimization space $\mbfp_{\mbfa^{(t)}}(i,j)$ increases with the number of paths between $\mcg_{i}$ and $\mcg_{j}$, making it more likely to optimize graphs of the same class. This property mirrors that of the cluster indicator and renders the maximization objective equivalent to Eq.~\ref{eq:relax_matching_step} when given the relaxed indicator $\hmbfc$.
Furthermore, when the relaxed indicator degenerates to hard cluster division, the supergraph reduces to several connected components, causing the optimization to focus solely on graphs within the same cluster. Hence, Eq.\ref{eq:relax_matching_step} is a well-defined generalization of the maximization objective.

With the proposed algorithm, the relaxed indicator $\hmbfc$ is no longer fixed, expanding the optimization space of the maximization step. This enables the maximization step to enhance the quality of the relaxed cluster. Both steps work jointly, optimizing both clustering and matching results.

In summary, we have introduced a novel learning-free approach for joint matching and clustering with a convergence guarantee. In the following section, we aim to integrate deep neural networks into our framework to enable an unsupervised learning paradigm for graph matching and clustering.

\section{Unsupervised Learning Model: \namenn} \label{sec:um3c}
We introduce \namenn, an unsupervised extension to \name, enabling joint matching and clustering within a guaranteed convergence framework. 
Recent deep learning advancements in graph matching~\citep{ZanfirCVPR18,WangICCV19,rolinek2020deep} highlight the value of learning node and edge features. However, \name's discrete optimization hinders gradient computation and machine learning integration. Previous attempts, like BBGM~\citep{rolinek2020deep}, struggled with gradient approximation and affinity generalization. GANN~\citep{Wang2019Neural}, although unsupervised, focused solely on Koopmans-Beckmann's QAP, ignoring edge features and outlier handling. \namenn\ addresses these challenges with two techniques: 1) edge-wise affinity learning and affinity loss, guided by pseudo matching label from \name\ solver; 2) Pseudo-label selection using the introduced relaxed indicator to improve pseudo label quality. We detail these techniques in the following subsections and present \name's integration into the unsupervised learning pipeline.

\subsection{Edge-wise Affinity Learning}

Our solver \name\ adopts Lawler's QAP~\citep{LoiolaEJOR07}, which embraces second-order information for enhanced robustness and performance. This requires meticulous design of the affinity matrix $\mbfk$.

Recent research on graph matching~\citep{rolinek2020deep, Wang2019Neural} adopts deep learning pipelines to compute node features $\mbff^{n}_{i}$ and edge features $\mbff^{e}_{ij}$ using VGG-16~\citep{simonyanICLR14vgg} and Spline CNN~\citep{FeyCVPR18}. A learned affinity $\mbfk^{learn}$ can be constructed from these first and second-order features as:
$\mbfk^{learn}_{u} = (\mbff^{n})^{\top} \mathbf{\Lambda} \mbff^{n}, \mbfk^{learn}_{q} = (\mbff^{e})^{\top} \mathbf{\Lambda} \mbff^{e}$. 
Here, $\mbfk_{u}$ and $\mbfk_{q}$ denote unary and quadratic affinities respectively, with $\mathbf{\Lambda}$ set to $\textbf{I}$ for stable training.

Our goal is to improve $\mbfk^{learn}$ using ground truth $\mbfx^{gt}$ (or pseudo-label $\mbfx^{psd}$). Commonly, a differentiable pipeline calculates prediction matching $\mbfx$ for applying the loss function. However, this approach conflates affinity construction and the solver, leading to customized affinities limited to the training solver and hampered generalization.

We devise a cross-entropy loss to quantify the discrepancy between two input affinity matrices:
\begin{equation}
    \mathcal{L}(\mbfk^{learn}, \mbfk^{gt})
    = \sum_{pq} K^{gt}_{pq} \log (K_{pq}^{learn}) + (1-K^{gt}_{pq}) \log (1 - K_{pq}^{learn}), \ 
    \mbfk^{gt} = \vct(\mbfx^{gt}) \cdot \vct(\mbfx^{gt})^{\top}
\end{equation}
Note that $K_{pq}$ represents the element in the $p$-th row and $q$-th column of $\mbfk$ between two graphs. This affinity loss decouples from solver effects, focusing solely on affinity quality, enhancing robustness and applicability to various solvers.

With its higher-order information, \namenn \ demonstrates significantly greater robustness against noise such as outliers, compared to GANN~\citep{wang2020graduated}, which centers only on node similarity and structural alignment.

\subsection{Unsupervised Learning using Pseudo Labels}

As previously mentioned, creating labels like $\mbfx^{gt}$ demands significant time and effort. Hence, we generate pseudo labels $\mbfx^{psd}$ using our learning-free solver \name, aiming to replace $\mbfx^{gt}$. To enhance the quality of $\mbfx^{psd}$, we propose two techniques: affinity fusion and pseudo label selection.

The quality of $\mbfx^{psd}$ hinges on input affinity $\mbfk$. However, at the beginning of the training, the learned affinity $\mbfk^{learn}$ is random and unreliable. Conversely, the hand-crafted affinity $\mbfk^{raw}$ captures only geometric information of node pairs, i.e. distances and angles, limiting its expressiveness. Consequently, we fuse both affinities, leveraging their strengths.

In particular, for its simplicity and experimental effectiveness, we linearly merge learned and hand-crafted affinities, balanced by hyperparameter $\alpha$ as $\mbfk = \mbfk^{learn} + \alpha \mbfk^{raw}$.
The hand-crafted $\mbfk^{raw}$ adheres to a standard procedure~\citep{YanPAMI16, JiangPAMI21}. Both affinities are normalized to the same scale.
This design capitalizes on the reliability of hand-crafted affinity and the expressiveness of learned affinity. In initial epochs, $\mbfk^{raw}$ enhances the quality of pseudo matching $\mbfx^{psd}$. Later, learned affinity $\mbfk^{learn}$ surpasses $\mbfk^{raw}$, further refining the final result.

Furthermore, we optimize pseudo label pair selection for loss computation, guided by the relaxed indicator $\hmbfc$.
Unlike GANN, which accumulates losses for all pairs within inferred clusters, where a single incorrect assignment affects multiple pseudo label pairs of differing categories. \namenn \ adheres to $\hmbfc$ and selects graph pairs with higher affinity rank as pseudo labels. This strategy, assuming higher affinity indicates greater accuracy, enhances pseudo affinity $\mbfk^{psd}$ quality. The overall loss is:
\begin{equation}
    \mcl_{all} = \sum_{ij} \hc_{ij} \cdot \mcl(\mbfk_{ij}^{learn}, \mbfk_{ij}^{psd}).
\end{equation}
This approach chooses more accurate matching pairs, bringing pseudo affinity closer to ground truth. Empirical validation of this approach can be found in Section~\ref{sec:ablation}.
\section{Experiments}\label{sec:exp}

\subsection{Protocols}

Experiments on all learning-free solvers were conducted on a laptop with a 2.30GHz 4-core CPU and 16GB RAM using Matlab R2020a. All learning-based experiments were carried out on a Linux workstation with Xeon-3175X@3.10GHz CPU, one RX8000, and 128GB RAM.

\paragraph{Datasets} We evaluate using two widely recognized datasets, Willow ObjectClass~\citep{ChoICCV13} and Pascal VOC~\citep{PascalVOC}. Detailed introduction and implementation of the datasets will be introduced in Sec.~\ref{sec:datasets_detail}. For convenience of notation, $N_c$ and $N_g$ (denoted as $N_c \times N_g$) represent the number of categories and graphs we selected and mixed for tests on MGMC.

\paragraph{Methods}
We present three method versions: \name-hard, \name, and \namenn. \name-hard serves as a baseline following the hard clustering MM framework (Section~\ref{sec:mm_framework}), employing Spectral Clustering and MGM-Floyd. \name\ represents the relaxed algorithm from Section~\ref{sec:our}, and \namenn\ is the unsupervised learning model described in Section~\ref{sec:um3c}, both using the \textbf{fuse-rank} scheme, if not otherwise specified.
We evaluate our methods in both learning-free and learning-based contexts. In learning-free experiments, we mainly compare \name\ with DPMC~\citep{WangAAAI20} and MGM-Floyd~\citep{JiangPAMI21}, following protocols from~\citet{WangAAAI20,wang2020graduated}. In learning-based experiments, we compare \namenn\ with unsupervised method GANN~\citep{wang2020graduated}, and supervised learning method BBGM~\citep{rolinek2020deep} and NGMv2~\citep{WangPAMI20}.

\paragraph{Evaluation Metrics}
We employ matching accuracy (MA), clustering purity (CP), rand index (RI), clustering accuracy (CA), and time cost as evaluation metrics, following prior research~\citep{WangAAAI20, wang2020graduated}. MA assesses matching performance, while CP, RI, CA represent the quality of cluster division. Detailed mathematical definitions are provided in Sec.~\ref{sec:metric_detail}. Mean results from 50 tests are reported unless specified otherwise.

\subsection{Performance on MGMC}

\begin{table*}[tb!]
	\centering
	\caption{Evaluation of matching and clustering metric with inference time for the mixture graph matching and clustering on Willow Object Class. Following the previous work~\citep{wang2020graduated}, We select Car, Duck, and Motorbike as the cluster classes.}
    \label{tab:main_test}
   \vspace{-5pt}
    \resizebox{\textwidth}{!}
    {
	\centering
\begin{tabular}{l|c|ccccc|ccccc|ccccc}
\toprule[2pt]
                 &            & \multicolumn{5}{c|}{$N_c = 3,  N_g = 8$, 0 outlier}                                             & \multicolumn{5}{c|}{$N_c = 3,  N_g = 8$, 2 outliers}                                            & \multicolumn{5}{c}{$N_c = 3,  N_g = 8$, 4 outliers}                                             \\
Model                         & Learning   & MA $\uparrow$  & CA $\uparrow$  & CP $\uparrow$  & RI $\uparrow$  & time(s) $\downarrow$ & MA $\uparrow$  & CA $\uparrow$  & CP $\uparrow$  & RI $\uparrow$  & time(s) $\downarrow$ & MA$\uparrow$   & CA $\uparrow$  & CP $\uparrow$  & RI $\uparrow$  & time(s) $\downarrow$ \\ \midrule
RRWM & free       & 0.748          & 0.815          & 0.879          & 0.871          & \textbf{0.4}         & 0.595          & 0.541          & 0.643          & 0.680          & \textbf{0.4}         & 0.572          & 0.547          & 0.661          & 0.685          & \textbf{0.6}         \\
MatchLift  & free       & 0.764          & 0.769          & 0.843          & 0.839          & 7.8                  & 0.530          & 0.612          & 0.726          & 0.730          & 10.6                 & 0.512          & 0.582          & 0.701          & 0.709          & 11.5                 \\
MatchALS      & free       & 0.635          & 0.571          & 0.689          & 0.702          & 1.3                  & 0.245          & 0.39           & 0.487          & 0.576          & 2.5                  & 0.137          & 0.383          & 0.480          & 0.571          & 2.6                  \\
CAO-C          & free       & 0.875          & 0.860          & 0.908          & 0.903          & 3.3                  & \textbf{0.727} & 0.574          & 0.678          & 0.704          & 3.7                  & \textbf{0.661} & 0.562          & 0.674          & 0.695          & 4.9                  \\
MGM-Floyd    & free       & 0.879          & \textbf{0.931} & \textbf{0.958} & \textbf{0.952} & 2.0                  & 0.716          & 0.564          & 0.667          & 0.696          & 2.3                  & 0.653          & 0.580          & 0.690          & 0.708          & 2.9                  \\
DPMC          & free       & 0.872          & 0.890          & 0.931          & 0.923          & 1.2                  & 0.672          & 0.617          & 0.724          & 0.733          & 1.4                  & 0.630          & 0.600          & 0.707          & 0.722          & 2.3                  \\
\name-hard                   & free       & 0.838          & 0.855          & 0.907          & 0.899          & 0.4                  & 0.620          & 0.576          & 0.684          & 0.705          & 0.6                  & 0.596          & 0.587          & 0.694          & 0.713          & 0.7                  \\
\textbf{\name~(ours)}           & free       & \textbf{0.884} & 0.911          & 0.941          & 0.938          & 0.5                  & 0.687          & \textbf{0.653} & \textbf{0.750} & \textbf{0.758} & 0.6                  & 0.635          & \textbf{0.646} & \textbf{0.748} & \textbf{0.753} & 1.0                  \\ \midrule[2pt]
NGMv2         & sup. & 0.885          & 0.801          & 0.843          & 0.825          & 9.0                  & 0.780          & 0.927          & 0.952          & 0.941          & 4.7                  & 0.744          & 0.886          & 0.916          & 0.906          & 4.7                  \\
BBGM     & sup. & 0.939          & 0.704          & 0.751          & 0.758          & \textbf{1.6}         & 0.806          & 0.964          & 0.977          & 0.971          & 4.8                  & 0.747          & 0.881          & 0.918          & 0.908          & 6.6                  \\ \midrule
GANN   & unsup.     & 0.896          & 0.963          & 0.976          & 0.970          & 5.2                  & 0.610          & 0.889          & 0.918          & 0.913          & 20.6                 & 0.461          & 0.847          & 0.893          & 0.881          & 30.2                 \\
\textbf{\namenn~(ours)}         & unsup.     & \textbf{0.955} & \textbf{0.983} & \textbf{0.988} & \textbf{0.988} & 3.2                  & \textbf{0.858} & \textbf{0.984} & \textbf{0.989} & \textbf{0.986} & \textbf{3.3}         & \textbf{0.815} & \textbf{0.981} & \textbf{0.987} & \textbf{0.986} & \textbf{3.6}         \\ \bottomrule[2pt]
\end{tabular}
	}
\end{table*}

\begin{table*}[tb!]
	\centering
	\caption{Evaluation of matching and clustering metric with inference time for the mixture graph matching and clustering on Pascal VOC. Mixture classes are randomly picked and average values of metrics are reported over all the combinations.}
    \label{tab:voc_main_test}
    \vspace{-5pt}
    \resizebox{\textwidth}{!}
    {
	\centering
\begin{tabular}{c|c|ccccc|ccccc}
\toprule[2pt]
                           &          & \multicolumn{5}{c|}{$N_c = 3,  N_g = 8$}                                                             & \multicolumn{5}{c}{$N_c = 5,  N_g = 10$}                                                             \\
Model                      & Learning & MA $\uparrow$             & CA $\uparrow$            & CP $\uparrow$            & RI $\uparrow$            & time(s) $\downarrow$             & MA $\uparrow$            & CA $\uparrow$            & CP $\uparrow$            & RI $\uparrow$            & time(s) $\downarrow$              \\ \midrule[2pt]
GANN                       & unsup.   & 0.2774          & 0.6949          & 0.7613          & 0.7680           & 33.785 & 0.2372          & 0.5103          & 0.5990           & 0.7816          & 64.015 \\
\textbf{\namenn(ours)}     & unsup.   & \textbf{0.4979} & \textbf{0.7015} & \textbf{0.769}  & \textbf{0.7750} & \textbf{5.2991}           & \textbf{0.4817} & \textbf{0.5551} & \textbf{0.6310}  & \textbf{0.7921} & \textbf{24.661}         \\ \midrule[2pt]
NGMv2                      & sup.     & \textbf{0.8114} & 0.7550          & 0.8083          & 0.8165          & 4.258           & 0.8210          & 0.6087          & 0.6890           & 0.8167          & 18.808          \\
BBGM                       & sup.     & 0.7919          & 0.7973          & 0.8406          & 0.8371          & \textbf{2.261}  & 0.7926          & 0.7261          & 0.7830           & 0.8656          & \textbf{8.514}   \\ \midrule
\multirow{2}{*}{\makecell[c]{BBGM(pretrained)\\+\textbf{\namenn(ours)}}} & unsup.   & 0.7928          & 0.8761          & 0.9065          & 0.9061          & 5.350             & 0.7862          & \textbf{0.7861} & 0.8320           & \textbf{0.8989} & 24.857           \\
                           & sup.     & 0.8037          & \textbf{0.8945} & \textbf{0.9212} & \textbf{0.9180}  & 5.164           & \textbf{0.7937} & 0.7850          & \textbf{0.8321} & 0.8986          & 25.752           \\ \bottomrule[2pt]
\end{tabular}
}
\end{table*}
We conduct mixture graph matching and clustering experiments on Pascal VOC and Willow ObjectClass, as detailed in Table~\ref{tab:main_test} and Table~\ref{tab:voc_main_test}. For Pascal VOC, we explore two size settings: 3 clusters $\times$ 8 images and 5 clusters $\times$ 10 images, where clusters and images are randomly chosen from the dataset. In the Willow ObjectClass dataset, we use the 3 clusters $\times$ 8 images setting while investigating the impact of outliers. This setting follows the previous work~\citet{wang2020graduated}, selecting Car, Duck, and Motorbike as the cluster classes, with images randomly sampled for each class.

Table~\ref{tab:main_test} presents the performance of our learning-free solver, \name, demonstrating its competitiveness compared to other learning-free algorithms. \name achieves top matching accuracy (over $1\%$ gain) in the settings without outliers. As the number of outliers increases, \name's strength in clustering metrics becomes apparent, with gains of $3\%$ - $5\%$ in clustering accuracy. \name\ also significantly outperforms \name-hard, affirming the effectiveness of our designed relaxed indicator.

Our unsupervised model, \namenn, excels in both matching and clustering tasks. When compared to the peer method GANN, \namenn\ showcases superior performance on both Pascal VOC and Willow ObjectClass, achieving remarkable improvements of $5.9\%$ - $24.45\%$ in matching accuracy and $0.66\%$ - $2\%$ in clustering metrics. This advantage becomes more pronounced in the presence of outliers, underscoring the robustness of our unsupervised approach. Moreover, \namenn\ even outperforms supervised models on Willow ObjectClass, as evident in the bottom section of Table~\ref{tab:main_test}. When combining BBGM(pretrain) + \namenn(finetune) on Pascal VOC, we observe a significant performance boost of $3.33\%$ - $7.88\%$ in clustering metrics over the backbone. Additionally, \namenn\ maintains impressive time-efficiency, consuming only $1.3$ - $3\times$ as much time as the two-graph matching NGMv2 and BBGM, while being $2.5$ - $6.3 \times$ faster than the peer method GANN.

\begin{figure}[tb!]
    \centering
    \includegraphics[width=\textwidth]{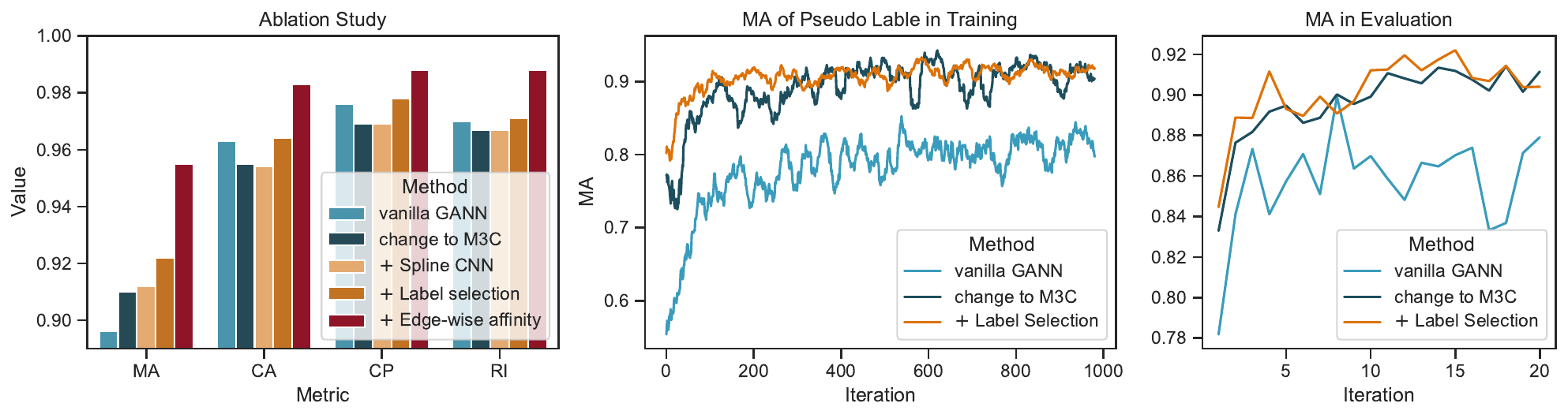}
    \vspace{-15pt}
    \caption{Ablation study of \name~by adding components under $3\times 8$ Willow ObjectClass dataset without outliers. Left: Evaluation on matching and clustering performance. Right: Quality of pseudo labels during training and evaluation by iteration.}
    \label{fig:ablation}
    \vspace{-15pt}
\end{figure}

\subsection{Ablation Study}\label{sec:ablation}

We evaluate \namenn\ on Willow ObjectClass with 3 clusters $\times$ 8 images, excluding outliers, to assess the effectiveness of different model components. We establish a baseline by substituting \name\ for the GA-GM solver in GANN. Given that GANN lacks Spline CNN for feature refinement, we also investigate the impact of introducing Spline CNN in the unsupervised method. The effectiveness of label selection and edge-wise affinity learning is validated by adding each component successively. 

Pseudo label selection results, represented by the matching accuracy of pseudo labels during both training and evaluation, are shown on the right side of Fig.~\ref{fig:ablation}. It demonstrates a $5\%$ improvement over baseline \name\ during the early training stage (first 100 iterations), affirming its ability to select pseudo labels closer to the ground truth.

Regarding edge-wise affinity learning, the left side of Fig.~\ref{fig:ablation} illustrates its significant contribution to our model, highlighting the inadequacy of hand-crafted affinity and the necessity of learning edge-wise affinity. Spline CNN further enhances matching accuracy. Notably, our baseline method already outperforms GANN, attributed to our solver \name.

\subsection{Additional Experiments}

We provide an index of additional experiments in the appendix for your reference. (\ref{sec:varying_setting}) Varying cluster number and cluster size: results on more diverse settings on Willow ObjectClass. (\ref{sec:rank_schemes}) Comparison of different ranking schemes. (\ref{sec:comparison_of_clustering_alg}) Comparison of different clustering algorithms. (\ref{sec:convergence_study}) Convergence study of M3C. (\ref{sec:hyperparameter}) Hyperparameter study of M3C. (\ref{sec:generalization_of_k}) Generalization test of learned affinity $\mbfk^{learn}$. (\ref{sec:more_visualization}) Visualization of matching results.

\section{Conclusion and Outlook}
\label{sec:con}
We have presented a principled approach for jointly solving graph matching and clustering in scenarios involving mixed modes - a challenge that extends beyond traditional graph matching. Our learning-free solver, \name, aligns with the minorize-maximization paradigm and introduces a relaxed cluster indicator to improve algorithm flexibility. Additionally, we integrate the M3C solver into an unsupervised learning pipeline, resulting in \namenn, with edge-wise affinity learning and pseudo label selection schemes. Remarkably, our methods outperform all state-of-the-art methods in the context of joint graph matching and clustering, which we believe is a practical setting to advance the research of graph matching. There are some future directions. First, while our approach employs a relaxed cluster indicator, it is constructed through traditional algorithms. There is an opportunity to harness graph learning techniques to enhance the structure's quality. Second, unsupervised learning still grapples with limitations of hand-craft affinities.

\subsubsection*{Acknowledgments}
This work was partly supported by National Key Research and Development Program of China (2020AAA0107600), National Natural Science Foundation of China (61972250, 72061127003), and Shanghai Municipal Science and Technology (Major) Project (2021SHZDZX0102, 22511105100).

\bibliographystyle{iclr2024_conference}


\newpage

\appendix
\section*{Appendix}

\section{Notation}

We first present all notations used in this paper for a better understanding of proposed algorithms and to facilitate the following discussion.

\begin{table*}[h] 
    \centering
    \caption{Main notations and description used in this paper.}
    \label{tbl-notation}
    \resizebox{\linewidth}{!}{
    \begin{tabular}{c|m{360pt}}
        \toprule
        \textbf{Notations} & \textbf{Descriptions} \\
        \midrule
        $N$    & Number of input graphs. \\
        \midrule
        $\mbg$ & $\mbg$ is a set of $N$ graphs $\mbg = \{\mcg_{1} \dots \mcg_{N}\}$. \\
        \midrule
        $\mcg$ & Graph to be matched with vertex set $\mcv_{\mcg}$ and edge set $\mce_{\mcg}$. \\
        \midrule
        $\mbx$ & $\mbx$ denotes all the possible pairwise matching results in graph set $\mbg$: $\mbx = \{\mbfx_{ij}\}_{1\leq i,j \leq N}$. \\
        \midrule
        $\mbfx_{ij}(\mbfx_{\mcg_{i},\mcg_{j}})$ & $\mbfx_{ij} (\mbfx_{\mcg_{i},\mcg_{j}}) \in \{0, 1\}^{n_i \times n_j}$ denotes the pairwise matching results between $\mcg_{i}$ and $\mcg_{j}$. \\
        \midrule
        $\mbfk_{ij}$ & $\mbfk_{ij} \in \mathbb{R}^{n_i n_j \times n_i n_j}$ denotes the affinity matrix between $\mcg_{i}$ and $\mcg_j$. Its diagonal and off-diagonal elements store the node-to-node and edge-to-edge affinities, respectively. \\
        \midrule
        $J_{ij}$ & $J_{ij} = \vct(\mbfx_{ij})^{\top} \mbfk_{ij} \vct(\mbfx_{ij})$ denotes the affinity score of graph pair $(\mcg_i, \mcg_j)$. \\
        \midrule
        $\mbfc$ & $\mbfc \in \{0, 1\}^{N \times N}$ is the cluster indicator of $N$ graphs. $c_{ij} = 1$ if $\mcg_i$ and $\mcg_j$ belongs to the same cluster, and $c_{ij} = 0$ otherwise. \\
        \midrule
        $n$, $n_o$& number of nodes in a graph and the number of its outliers\\
        \midrule
        $N_{c}, N_{g}$ & $N_{c}$ denotes the number of clusters. The size of each cluster is represented as $\{N_{g_1}, N_{g_2}, \ldots, N_{g_{N_c}}\}$. When the size of clusters is the same, we shorthand it as $N_{g}$. \\
        \midrule
        $\hmbfc$ & $\hmbfc \in \{0, 1\}^{N \times N}$ is the relaxed cluster indicator by solving relaxed constraints.\\
        \midrule
        $\mbfa$ & $\mbfa \in \{0, 1\}^{N \times N}$ denotes the adjacency matrix for supergraph (Def.~\ref{def:supergraph}).  \\
        \midrule
        $\vct(\cdot)$ & Column-vectorized operation of given matrix. \\
        \midrule
        $\texttt{SCC}(\cdot)$ & The number of strong connection components of the input cluster. \\
        \midrule
        $r$ & Hyperparameter for cluster relaxation. $r$ stands for the ratio of graph pairs to choose. \\
        \midrule
        $\alpha$ & Hyperparameter for affinity construction. $\alpha$ is the weight of the hand-craft affinity. \\
        \bottomrule
    \end{tabular}
    }
\end{table*}

\section{Comparison with Previous Works}
\label{sec:comparison_with_previous_works}
We underscore some key differences between our proposed method and two previous works, DPMC~\citep{WangAAAI20} and GANN~\citep{wang2020graduated, wang23pamiunsupervised}, focusing on MGMC in Table~\ref{tab:comp_gann_m3c}. 
This comparison is to provide a comprehensive illustration of our novel contributions and substantial advancements in this field.

Some key points of differentiation are worth emphasizing:
\begin{itemize}[leftmargin=*,itemsep=2pt,topsep=0pt,parsep=0pt]
\item Both approaches employed rigid optimization structures (a tree and a fully connected graph), while \name\ utilizes a discrete and partly connected supergraph, enhancing efficiency and flexibility in matching and clustering.
\item DPMC lacks a convergence guarantee, and GANN requires hundreds of iterations for convergence, whereas \name\ achieves convergence with fewer iterations.
\item While GANN claimed to jointly optimize matching and clustering, empirical results reveal its inability to adapt clustering once it incorporates clustering during matching, which becomes a variant of hard clustering. In contrast, \name\ demonstrates the crucial capability to jointly adjust its relaxed indicator alongside matching, a defining feature of a true joint optimization framework.
\item GANN is rooted in Koopmans-Beckmann’s QAP~\citep{KBQAP57}, which only considers structural similarity during matching, limiting its generalizability to edge feature learning. In contrast, \namenn\ utilizes a more generalized Lawler's QAP~\citep{LoiolaEJOR07} variation and introduces edge-wise affinity learning, making it more versatile.
\end{itemize}

\begin{table*}[h]
\centering
\caption{Comparison of the existing three works designated for mixture graph matching and clustering: DPMC~\citep{WangAAAI20}, GANN~\citep{wang2020graduated}, and ours: M3C/UM3C. 
}
\label{tab:comp_gann_m3c}
\resizebox{\linewidth}{!}{
\begin{tabular}{r|ccc}
\toprule
Methods & DPMC & GANN & M3C/UM3C (ours) \\ \midrule
Optimization Space & Discrete & Continuous & Discrete \\
Framework & - & Graduated Assign & Minorize-Maximization \\
Supergraph Structure & discrete tree & real-value edge fully connected & discrete and partly-connected \\
Joint Optimize & $\times$ & $\checkmark$ & $\checkmark$ \\
Convergence & $\times$ & slow & fast \\
Affinity Learning & - & node & node and edge \\
Pseudo Label & - & unselected & selected w/ relaxed indicator \\
Empirical Robustness & mediocre & hardly work given 2+ outliers & 30\% accuracy improvement over GANN \\ \bottomrule
\end{tabular}}
\end{table*}

\section{Convergence Analysis of M3C}
\subsection{Cluster Division and Cluster Indicator}
\label{sec:app:cdci}
In this section, we show the relationship between cluster division and cluster indication. We prove that the \textbf{transitive relation} $c_{ij} c_{jk} \leq c_{ik}$ is the sufficient and necessary condition for $\mbfc$ to be a strict cluster division, which is proposed in Def.~\ref{def:cluster_indicator} of Sec.~\ref{sec:Preliminaries}. 

\begin{proposition} Transitive relation $c_{ij} c_{jk} \leq c_{ik}$ is the sufficient and necessary condition for $\mbfc$ to be a strict cluster division, where $\mbfc \in \{0, 1\}^{N \times N}$ and $c_{ij}$ denotes whether $\mcg_i$ and $\mcg_j$ are belong to the same category.
\end{proposition}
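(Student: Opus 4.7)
The plan is to recognize the statement as a direct instance of the standard bijection between partitions of a finite set and equivalence relations on that set. A strict cluster division of $\{\mcg_1,\dots,\mcg_N\}$ is by definition a set partition, which corresponds one-to-one with equivalence relations via "same class." Under the natural reading of the cluster indicator, the matrix $\mbfc$ is automatically reflexive ($c_{ii}=1$, since a graph shares a cluster with itself) and symmetric ($c_{ij}=c_{ji}$), so the only equivalence axiom that needs to be imposed explicitly is transitivity. My goal is therefore to verify that, over the Boolean entries $\{0,1\}$, the inequality $c_{ij}c_{jk}\le c_{ik}$ is logically equivalent to transitivity of the induced relation $i\sim j\iff c_{ij}=1$.

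For the necessity direction I would assume $\mbfc$ encodes a strict cluster division; then $\sim$ is transitive by definition of a partition, so whenever $c_{ij}c_{jk}=1$ we have $c_{ij}=c_{jk}=1$, forcing $c_{ik}=1$, while in the remaining case $c_{ij}c_{jk}=0$ the bound $0\le c_{ik}$ is trivial. For the sufficiency direction I would assume the inequality holds for all triples $(i,j,k)$. Then $c_{ij}=c_{jk}=1$ gives $c_{ik}\ge 1$, hence $c_{ik}=1$, which is exactly transitivity of $\sim$. Combining this with the built-in reflexivity and symmetry makes $\sim$ an equivalence relation, whose equivalence classes form a partition of the graph set whose indicator matrix is $\mbfc$.

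The main obstacle is not technical but definitional: the proposition only names the transitivity inequality, so I need to be explicit in the write-up about the conventions that a "cluster indicator" is always reflexive and symmetric. Once that bookkeeping is done, both directions reduce to a one-line Boolean calculation on $\{0,1\}$-valued entries, and the conclusion follows from the classical partition/equivalence-relation correspondence, with no further machinery needed.
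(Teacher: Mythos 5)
Your proposal is correct and follows essentially the same route as the paper's proof: both reduce the claim to the observation that, over Boolean entries, the inequality $c_{ij}c_{jk}\leq c_{ik}$ is exactly transitivity of the ``same cluster'' relation, with reflexivity and symmetry of $\mbfc$ taken as implicit, so the classes of the resulting equivalence relation form the partition. The paper merely spells this out by explicit case analysis on the values of $c_{ij}$ and $c_{jk}$ (using $c_{ij}=c_{ji}$ in the sufficiency direction), which your appeal to the partition/equivalence-relation correspondence subsumes.
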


\begin{proof}
\textbf{Necessary condition}: $\mbfc$ is a strict cluster division $\Longrightarrow c_{ij} c_{jk} \leq c_{ik}$.
\begin{itemize}[leftmargin=*,itemsep=2pt,topsep=0pt,parsep=0pt]
    \item When $c_{ij}=1$ and $c_{jk}=1$, $\mcg_i$ and $\mcg_j$ are of the same class, where $\mcg_j$ and $\mcg_k$ are of the same class, too. Therefore, $\mcg_i$ and $\mcg_k$ are of the same class, which means $c_{ik}=1$ and $c_{ij} c_{jk} \leq c_{ik}$ holds.
    \item When one of $c_{ij}$ and $c_{jk}$ is equal to 1 and another equals 0, we assume  $c_{ij}=1$ and $c_{jk}=0$ without loss of generality. That is, $\mcg_i$ and $\mcg_j$ are of the same class while $\mcg_j$ and $\mcg_k$ are not. Therefore, $\mcg_i$ and $\mcg_k$ are not of the same class either, which means $c_{ik}=0$ and $c_{ij} c_{jk} = 0 \leq c_{ik}$ also holds.
    \item When $c_{ij}=0$ and $c_{jk}=0$, $\mcg_i$ and $\mcg_j$ are not of the same class, while $\mcg_j$ and $\mcg_k$ are neither of the same class. In that case, we cannot tell the relationship between $\mcg_i$ and $\mcg_k$, so $c_{ik} = 0 / 1$. It still holds that $c_{ij} c_{jk} \leq c_{ik}$.
\end{itemize}

\textbf{Sufficient condition}: $c_{ij} c_{jk} \leq c_{ik} \Longrightarrow$  $\mbfc$ is a strict cluster division.
\begin{itemize}[leftmargin=*,itemsep=2pt,topsep=0pt,parsep=0pt]
    \item If $\mcg_i$ and $\mcg_j$, $\mcg_j$ and $\mcg_k$ are both of the same class, $c_{ij} c_{jk} = 1 \leq c_{ik}$, which means $c_{ik} = 1$ and $\mcg_i$ and $\mcg_k$ are of the same class.
    
    \item If $\mcg_i$ and $\mcg_j$ are of the same class, and $\mcg_j$ and $\mcg_k$ are of different classes, we have $c_{ij} = c_{ji} = 1$ and $c_{jk} = 0$. We find that $c_{ik} = c_{ji} c_{ik} \leq c_{jk} = 0$, thus, $c_{ik} = 0$ and $\mcg_i$ and $\mcg_k$ are not of the same class.
\end{itemize}
Above all, transitive relation $c_{ij} c_{jk} \leq c_{ik}$ is the sufficient and necessary condition for $\mbfc$ to be a strict cluster division.
\end{proof}

\subsection{Proof of the Convergence of Minorize-Maximization Framework}
\label{sec:proof_conv_mm}
In this section, we prove the convergence of the MM Framework. 

The objective function $\mcf(\mbx, \mbfc)$ is defined in Eq.~\ref{eq:mgmc} as:
\begin{equation}
    \begin{aligned}
        & \max_{\mathbb{X}, \mathbf{C}} \mcf(\mbx, \mbfc)=\max_{\mathbb{X}, \mathbf{C}} \frac{\sum_{ij} c_{ij} \cdot \vct(\mbfx_{ij})^{\top} \mbfk_{ij} \vct(\mbfx_{ij})}{\sum_{ij} c_{ij}} \\
        s.t. \hspace{5pt} & \mbfx_{ij} \mathbf{1}_{n_j} \leq \mathbf{1}_{n_i}, \mbfx_{ij}^{\top} \mathbf{1}_{n_i} \leq \mathbf{1}_{n_j}, \quad
         c_{ik} c_{kj} \leq c_{ij}, \forall i,j,k, \hspace{2pt} \texttt{SCC}(\mbfc) = N_c,\\
         & \mbfx_{ij} \in \{0, 1\}^{n_i \times n_j}, c_{ij} \in \{0, 1\}.
    \end{aligned}
    \label{eq:mgmc_recap}
\end{equation}
Let the $\mbfc = h(\mbx)$ solve the optimal cluster division for $\mbx$, and $f(\mbx) = \mcf(\mbx, h(\mbx))$ be an objective function with single variable $\mbx$. Without loss of generality, we have
\begin{equation}
    \max_{\mbx} f(\mbx) = \max_{\mbx} \mcf(\mbx, h(\mbx)) = \max_{\mbx} \max_{\mbfc | \mbx} \mcf(\mbx, \mbfc) = \max_{\mbx, \mbc} \mcf(\mbx, \mbfc)
\end{equation}
Therefore, $f(\mbx)$ is a variant of the objective function.

The MM algorithm works by finding a surrogate function that minorizes the objective function $f(\mbx)$. Optimizing the surrogate function will either improve the value of the objective function or leave it unchanged. To optimize the surrogate function, two steps are conducted iteratively.
\begin{itemize}[leftmargin=*,itemsep=2pt,topsep=0pt,parsep=0pt]
\item \textbf{Construction.} The surrogate function $g(\mbx | \mbx^{(t)}) = \mcf(\mbx, h(\mbx^{(t)}))$ is constructed by inferring the best cluster based on current matching results $\mbx^{(t)}$:
\begin{equation}
    \begin{aligned}
        h(\mbx^{(t)}) &= \argmax_{\mathbf{C}} \sum_{ij} c_{ij} \cdot \vct(\mbfx_{ij}^{(t)})^{\top} \mbfk_{ij} \vct(\mbfx_{ij}^{(t)}) \\
        s.t. \hspace{5pt} & c_{ik} c_{kj} \leq c_{ij}, \forall i,j,k, \hspace{2pt} \texttt{SCC}(\mbfc) = N_c.
    \end{aligned}
    \label{eq:cluster_step}
\end{equation}
\item \textbf{Maximization.} Maximize $g(\mbx | \mbx^{(t)})$ instead of $f(\mbx)$, which is solved by off-the-shelf graph matching solver.
\begin{equation}
    \begin{aligned}
        \mathbb{X}^{(t+1)} 
        =& \argmax_{\mbx} g(\mbx | \mbx^{(t)}) = \argmax_{\mbx} f(\mbx, h(\mbx^{(t)})) \\
        =& \argmax_{\mathbb{X}} \sum_{ij} c_{ij}^{(t)} \cdot \vct(\mbfx_{ij})^{\top} \mbfk_{ij} \vct(\mbfx_{ij}) \\
        s.t. \hspace{5pt} &\mbfx_{ij} \mathbf{1}_{n_j} \leq \mathbf{1}_{n_i}, \mbfx_{ij}^{\top} \mathbf{1}_{n_i} \leq \mathbf{1}_{n_j}. \\
    \end{aligned}
    \label{eq:matching_step}
\end{equation}
\end{itemize}
The \textbf{Construction} step ensures two properties that $g(\mbx | \mbx^{(t)})$ holds: 1) Since $f(\mbx) = \mcf(\mbx, h(\mbx))$ adopts the optimal cluster division, it holds for all $\mbx$ that
\begin{equation}\label{eq:g_p_1}
    g(\mbx | \mbx^{(t)}) = \mcf(\mbx, h(\mbx^{(t)}))  \leq \mcf(\mbx, h(\mbx)) =  f(\mbx)
\end{equation}
2) It also holds that
\begin{equation}\label{eq:g_p_2}
    g(\mbx^{(t)} | \mbx^{(t)}) = \mcf(\mbx^{(t)}, h(\mbx^{(t)})) =  f(\mbx^{(t)}).
\end{equation}
With Eq.~\ref{eq:g_p_1} and Eq.~\ref{eq:g_p_2}, for each iteration, the objective function will never decrease as shown in Eq.~\ref{eq:f_monotonic},
\begin{equation}
    f(\mbx^{(t+1)}) \ge g(\mbx^{(t+1)} | \mbx^{(t)}) \ge g(\mbx^{(t)} | \mbx^{(t)}) = f(\mbx^{(t)}).
\end{equation}
Thus we finish the proof of the convergence of our proposed MM Framework.

\subsection{Quick Convergence of Hard Clustering}
\label{sec:app:hard_cluster}
In this section, we show the proof of the proposition proposed in Sec.~\ref{sec:relaxed_indicator} that the hard clustering framework will converge quickly.

\begin{proposition}[Proposition~\ref{prop:cluster}]\label{prop:cluster_recap}
If the size of each cluster is fixed, the hard cluster indicator converges to the local optimum in one step:
\begin{equation}
\begin{aligned}
\mbfc^{(t)} = \mbfc^{(t+1)}, \text{ if }  \{N^{(t)}_{g_1}, N^{(t)}_{g_2}, \ldots, N^{(t)}_{g_{N_c}}\} = \{N^{(t+1)}_{g_1}, N^{(t+1)}_{g_2}, \ldots, N^{(t+1)}_{g_{N_c}}\}.
\end{aligned}
\end{equation}
\end{proposition}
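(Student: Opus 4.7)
My approach is to recast the construction step as a combinatorial maximum-weight partition problem over partitions of prescribed size profile, and then use a monotonicity argument: after one maximization step, the within-cluster affinities can only grow while the between-cluster ones are unchanged, so the previously optimal partition remains optimal.

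First, I would fix notation by writing $J_{ij}(\mbfx) = \vct(\mbfx_{ij})^{\top} \mbfk_{ij} \vct(\mbfx_{ij})$ and rewriting the construction step~(\ref{eq:cluster_step}) as
\begin{equation}
    \mbfc^{(t)} \in \argmax_{\mbfc} \sum_{ij} c_{ij} J_{ij}(\mbx^{(t)}) \quad \text{s.t.} \ \texttt{SCC}(\mbfc)=N_c, \ \{N^{(t)}_{g_k}\}_{k=1}^{N_c} \text{ fixed}.
\end{equation}
The fixed size profile reduces this to a purely combinatorial partitioning problem. Next, I would inspect the maximization step~(\ref{eq:matching_step}) and observe that only pairs with $c_{ij}^{(t)}=1$ enter the objective, so that the solver leaves the matching of inter-cluster pairs untouched and therefore $J_{ij}(\mbx^{(t+1)}) = J_{ij}(\mbx^{(t)})$ whenever $c_{ij}^{(t)}=0$, while $J_{ij}(\mbx^{(t+1)}) \geq J_{ij}(\mbx^{(t)})$ whenever $c_{ij}^{(t)}=1$ (the old matching is feasible for the maximization).

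With these two ingredients in hand, I would close the argument by showing that for any candidate partition $\mbfc'$ with the same size profile,
\begin{equation}
    \sum_{ij} c^{(t)}_{ij} J_{ij}(\mbx^{(t+1)}) \;\geq\; \sum_{ij} c'_{ij} J_{ij}(\mbx^{(t+1)}).
\end{equation}
The proof is a short splitting step: write $J_{ij}(\mbx^{(t+1)}) = J_{ij}(\mbx^{(t)}) + \Delta_{ij}$ with $\Delta_{ij} \geq 0$ supported on $\{c^{(t)}_{ij}=1\}$; optimality of $\mbfc^{(t)}$ at time $t$ controls the $J(\mbx^{(t)})$ piece, while $\sum c^{(t)}_{ij}\Delta_{ij} \geq \sum c'_{ij}\Delta_{ij}$ follows because the support of $\Delta$ lies inside $\{c^{(t)}_{ij}=1\}$. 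Combining the two inequalities yields optimality of $\mbfc^{(t)}$ at time $t+1$, hence $\mbfc^{(t+1)} = \mbfc^{(t)}$ (up to tie-breaking, which is the natural assumption under a unique argmax).

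The main obstacle I anticipate is the status of inter-cluster matchings. If the implementation happens to refresh $\mbfx_{ij}$ for pairs with $c_{ij}^{(t)}=0$ (e.g. via an auxiliary two-graph solver), the clean identity $J_{ij}(\mbx^{(t+1)}) = J_{ij}(\mbx^{(t)})$ fails and one must instead track how much these scores can move; I would address this by either restricting to the formulation in~(\ref{eq:matching_step}), where those entries do not appear, or by noting that any re-matching of a cross-cluster pair cannot benefit from structural information tied to $\mbfc^{(t)}$ and handling the resulting tie/degenerate case separately. The secondary subtlety is the uniqueness of the argmax in the construction step, which I would either assume generically or interpret the equality $\mbfc^{(t)} = \mbfc^{(t+1)}$ as holding for an appropriate choice among the optima.
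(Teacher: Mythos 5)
Your proposal is correct and is essentially the paper's own argument: both rest on the same two facts (cross-cluster scores are unchanged and intra-cluster scores are non-decreasing after the maximization step, and the normalizers cancel because the size profiles agree), with the paper running the argument by contradiction from the optimality of $\mbfc^{(t+1)}$ while you run it directly from the optimality of $\mbfc^{(t)}$. Your explicit handling of the tie-breaking caveat is a minor improvement, since the paper's concluding ``Contradiction'' also implicitly relies on it.
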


\begin{proof}
    According to the maximization step in the MM framework, the pairwise matching is updated individually. The improvement on $\mbfx_{ij}$ does not influence the optimization of other pairs. Therefore, it holds that,
    \begin{equation}
        \label{eq:prop}
        \begin{aligned}
            J_{ij}^{(t + 1)} \ge J_{ij}^{(t)}, \ \ \forall c^{t}_{ij} = 1 \\
            J_{ij}^{(t + 1)} = J_{ij}^{(t)}, \ \ \forall c^{t}_{ij} = 0
        \end{aligned}
    \end{equation}
    where $J_{ij}=\vct(\mbfx_{ij})^{\top} \mbfk_{ij}\vct(\mbfx_{ij})$ denotes the pairwise affinity score.
    
    We prove the proposition by contradiction. Assume that $\mbc^{(t)} \neq \mbc^{(t+1)}$. According to the optimization framework, we have that
    \begin{equation}
        f(\mbx^{(t+1)}) \ge g(\mbx^{(t+1)} | \mbx^{(t)}) \ge g(\mbx^{(t)} | \mbx^{(t)}),
    \end{equation}
    which means,
    \begin{equation}
        \mcf(\mbx^{(t+1)}, \mbfc^{(t+1)}) \ge \mcf(\mbx^{(t+1)}, \mbfc^{(t)}) \ge \mcf(\mbx^{(t)}, \mbfc^{(t)}).
    \end{equation}
    Therefore,
    \begin{equation}
        \begin{aligned}
              & \mcf(\mbx^{(t+1)}, \mbfc^{(t+1)}) - \mcf(\mbx^{(t+1)}, \mbfc^{(t)}) \\
            = & \frac{1}{\sum c^{(t+1)}_{ij}} \sum_{ij} c^{(t+1)}_{ij} \cdot J_{ij}^{(t+1)} - \frac{1}{\sum c^{(t)}_{ij}} \sum_{ij} c^{(t)}_{ij} \cdot J_{ij}^{(t+1)} \\
            \ge & 0. \\ 
        \end{aligned}
    \end{equation}
    Since
    \begin{equation}
        \{N^{(t)}_{g_1}, N^{(t)}_{g_2}, \ldots, N^{(t)}_{g_{N_c}}\} = \{N^{(t+1)}_{g_1}, N^{(t+1)}_{g_2}, \ldots, N^{(t+1)}_{g_{N_c}}\},
    \end{equation}
    it holds that,
    \begin{equation}
        \sum c^{(t+1)}_{ij} =\sum c^{(t)}_{ij}.
    \end{equation} 
    According to Eq.~\ref{eq:prop}, we can further have
    \begin{equation}
        \begin{aligned}
            & \mcf(\mbx^{(t+1)}, \mbfc^{(t+1)}) - \mcf(\mbx^{(t+1)}, \mbfc^{(t)}) \\
            = & \frac{1}{\sum c^{(t)}_{ij}} \left( \sum_{ij} c^{(t+1)}_{ij} \cdot J_{ij}^{(t+1)} - \sum_{ij} c^{(t)}_{ij} \cdot J_{ij}^{(t+1)} \right) \\ 
            = & \frac{1}{\sum c^{(t)}_{ij}} \left( \sum_{c^{(t+1)}_{ij} = 1, c^{(t)}_{ij} = 0}  J_{ij}^{(t+1)} - \sum_{c^{(t+1)}_{ij} = 0, c^{(t)}_{ij} = 1} J_{ij}^{(t+1)}\right) \\ 
            = & \frac{1}{\sum c^{(t)}_{ij}} \left( \sum_{c^{(t+1)}_{ij} = 1, c^{(t)}_{ij} = 0}  J_{ij}^{(t)} - \sum_{c^{(t+1)}_{ij} = 0, c^{(t)}_{ij} = 1} J_{ij}^{(t+1)}\right) \\ 
            \ge & 0.
        \end{aligned}
    \end{equation}
    Moreover, according to $J_{ij}^{(t + 1)} \ge J_{ij}^{(t)}, \ \forall c^{t}_{ij} = 1$, we have
    \begin{equation}
        \begin{aligned}
         \sum_{c^{(t+1)}_{ij} = 1, c^{(t)}_{ij} = 0}  J_{ij}^{(t)} - \sum_{c^{(t+1)}_{ij} = 0, c^{(t)}_{ij} = 1} J_{ij}^{(t)} 
        \ge \sum_{c^{(t+1)}_{ij} = 1, c^{(t)}_{ij} = 0}  J_{ij}^{(t)} - \sum_{c^{(t+1)}_{ij} = 0, c^{(t)}_{ij} = 1} J_{ij}^{(t+1)},            
        \end{aligned}
    \end{equation}
    which means
    \begin{equation}
        \frac{1}{\sum c^{(t+1)}_{ij}} \sum_{ij} c^{(t+1)}_{ij} \cdot J_{ij}^{(t)} - \frac{1}{\sum c^{(t)}_{ij}} \sum_{ij} c^{(t)}_{ij} \cdot J_{ij}^{(t)} \ge 0.
    \end{equation}
    That is to say
    \begin{equation}
        \mcf(\mbx^{(t)}, \mbfc^{(t+1)}) \ge \mcf(\mbx^{(t)}, \mbfc^{(t)}),
    \end{equation}
    which means $\mbfc^{(t)}$ is not the optimal cluster division for $\mbfx^{(t)}$. Contradiction.
    Therefore, we have $\mbfc^{(t)} =  \mbfc^{(t+1)}$.
\end{proof}

\section{Detailed Algorithm of M3C}
Please refer to Algorithm~\ref{alg:main} and Algorithm~\ref{alg:um3c} for a detailed presentation of our proposed learning-free solver \name\ and unsupervised learning method \namenn.
\newcommand\mycommfont[1]{\footnotesize\ttfamily\textcolor{blue}{#1}}
\SetCommentSty{mycommfont}
\begin{algorithm}[tb!]
    \caption{Learning-free solver {\name}}
    \label{alg:main}
    \SetAlgoLined
    \DontPrintSemicolon
    \SetNoFillComment
	\KwIn{Iterations number $T$, Cluster number $N_c$, Graph set $\{ \mcg_1, \dots \mcg_N \}$.}
    Construct affinity matrix $\mbfk$ from node coordinates of graph set $\{ \mcg_1, \dots \mcg_N \}$ through standard process~{\citep{YanPAMI16, JiangPAMI21}}. \\
	Obtain initialization matching $\mbx^{(0)}$ via a two-graph solver e.g. RRWM~\citep{Cho2010ReweightedRW};\\
	\For{t = 1 : T}
	{
	    \tcc{Construction-Step}
	    Construct the surrogate function $g(\mbx | \mbx^{(t-1)})$ via solving $\hmbfc^{t} = \hat{h}(\mbx^{(t-1)})$ in Eq.~\ref{eq:relax_cluster_step} with three strategy candidates: global-rank, local-rank, or fuse-rank; \\
	    \tcc{Maximization-Step}
	    Set $\mbfa^{(t)}$ as relaxed indicator $\hmbfc^{(t)}$, maximizing $g(\mbx | \mbx^{(t-1)})$ via Eq.~\ref{eq:relax_matching_step} to find optimal matching composition $\mbx$ on $\mbfa^{(t)}$;\\
	}
	Obtain the affinity score for each graph pair: $J_{ij}=\vct(\mbfx_{ij})^{\top} \mbfk_{ij} \vct(\mbfx_{ij})$;\\
	Sparsification on the affinity score: $\set{{J}^k_{ij}} = \text{KNN}(\set{J_{ij}}, k)$;\\
	Apply clustering algorithm, e.g. Spectral Clustering~\citep{ng2002spectral}, Multi-Cut~\citep{swoboda2017message}, on $\set{{J}^k_{ij}}$ to get $\mbfc$.\\
	\KwOut{Matching $\mbx^{(t)}$, cluster $\mbfc$.}
\end{algorithm}
\setlength{\textfloatsep}{5pt}

\SetCommentSty{mycommfont}
\begin{algorithm}[tb!]
    \caption{Unsupervised learning {\namenn}}
    \label{alg:um3c}
    \SetAlgoLined
    \DontPrintSemicolon
    \SetNoFillComment
	\KwIn{Images $\set{\mci_1, \ldots, \mci_N}$, node coordinates $\set{\mcv_1, \ldots, \mcv_N}$}
    Obtain node and edge features $\mbff^n, \mbff^e$ via VGG16 and SplineCNN;\\
    Construct $\mbfk^{learn}$ from $\mbff^n, \mbff^e$ by $\mbfk^{learn}_{u} = (\mbff^{n})^{\top} \mathbf{\Lambda} \mbff^{n}, \mbfk^{learn}_{q} = (\mbff^{e})^{\top} \mathbf{\Lambda} \mbff^{e}$;\\
    Obtain hand-crafted $\mbfk^{raw}$ following \citet{ZhouPAMI16,YanPAMI16,JiangPAMI21,WangAAAI20};\\
    $\mbfk = \mbfk^{learn} + \alpha \mbfk^{raw}$; \\
    \uIf{training}
    {
    $\mbfx^{psd}, \hmbfc = \text{M3C}(\mbfk)$;\\
    $\mbfk^{psd} = \vct(\mbfx^{psd})\cdot \vct(\mbfx^{psd})$;\\
    $\mcl_{all} = \sum_{ij} \hc_{ij} \cdot \mcl(\mbfk_{ij}^{learn}, \mbfk_{ij}^{psd})$\\
    }
    \Else{
    $\mbfx, \mbfc = \text{M3C}(\mbfk)$;\\
    }
    \KwOut{Matching $\mbx$, cluster $\mbfc$.}
\end{algorithm}
\setlength{\textfloatsep}{5pt}

\section{Time Complexity Analysis between Learning-Free Solvers}
\label{sec:comp}
\begin{table}[tb!]
	\centering
	\caption{Time complexity comparison of peer methods, where $N$ and $n$ is the total number of graphs and number of nodes for each graph (we assume equal size here for notation simplicity), $T_i$ denotes the iterations needed for the algorithm to converge, and $\tau_{pair}$ denotes the time cost of calling a two-graph matching solver.}
    \label{tab:complexity}
    \resizebox{\textwidth}{!}{
        \begin{tabular}{llll}
        \toprule
        method & time complexity for MGMC                          \\ \midrule
        MGM-Floyd~\citep{JiangPAMI21}  & $\mco(N^4 n + N^3 n^3 + N^2 \tau_{pair} + tNN_cd)$     \\
        CAO-C~\citep{YanPAMI16}    & $\mco(N^4 n + N^3 n^3 + N^2 \tau_{pair} + tNN_cd)$         \\ 
        DPMC~\citep{WangAAAI20}    & $\mco(T_1 * (N^2 n^4 + N^2 \log N + tNN_cd) + N^2 \tau_{pair})$    \\
        GA-MGMC~\citep{wang2020graduated}  & $\mco(T_2 * (T' N^2 n^2 d + tNN_cd))$                    \\
        M3C (ours)    & $\mco(T_3 * (N^3 n^3 + N^2 \log N^2 + tNN_cd) + N^2 \tau_{pair})$              \\
        \bottomrule
        \end{tabular}
    }
\end{table}
We analyze the time complexity of our method \name\ and compare it with other learning-free solvers in Table~\ref{tab:complexity}. Notably, our algorithm outperforms vanilla MGM-Floyd~\citep{JiangPAMI21} in terms of speedup due to a lower time complexity bound. Additionally, our approach benefits from a significantly reduced constant factor.

Let $N$ and $n$ denote the number of graphs and nodes in one graph (ignoring the different graph sizes for the brevity of notation), respectively, and $\tau_{pair}$ denote the time cost of a two-graph matching solver, such as RRWM~\citep{Cho2010ReweightedRW}. It costs $\mco(N^2 \tau_{pair})$ to calculate the score matrix and $\mco(N^2\log N^2)$ to construct the supergraph. In the worst case, we would add $\frac{N(N-1)}{2}$ edges to the supergraph, resulting in $\mco(N^3 n^3)$ time cost for performing MGM-Floyd. Additionally, spectral clustering is applied, with a time cost of $\mco(tNN_cd)$, where $t$ is the k-means iteration, and $d$ is the dimension for embedded features. $T$ denotes the number of iterations that \name\ takes to converge. Therefore, the total time complexity is given by $\mco(T * (N^3 n^3 + N^2 \log N^2 + tNN_cd) + N^2 \tau_{pair})$.

It's important to note that \name\ runs faster than peer methods in Table~\ref{tab:main_test}. This is primarily due to two reasons. Firstly, \name\ exhibits a significantly lower constant factor in the maximization step, and the expected number of edges added is much smaller than in the worst case scenario. In practice, the actual run time of the maximization step can be approximated as $\theta N^3 n^3$, where $\theta < 1$. Empirical studies show that $\theta\approx 0.16$ for $N_c=5$ and $N_g=20$, and this value may further decrease with larger $N_c$ and $N_g$. Secondly, owing to the relaxed cluster indicator, \name\ converges much faster than GA-GAGM, resulting in a significantly reduced number of iterations: $T_3 \ll T_2$. As a result, the actual running time of \name\ is significantly less than that of DPMC and GA-MGMC.

\section{Implementation Details}
\label{sec:app:imp_detail}
In this section, we introduce more implementation details of \name\ and \namenn, including the detailed structure of the neural network we applied, the construction of affinity matrix $\mbfk$, and some hyper-parameters setting.

\subsection{Network Structure for Feature Extraction}
\begin{table*}[!b]
    \centering
    \caption{Network structure of vgg16\_bn as applied in \namenn. \yanq{The pre-trained weight is downloaded by PyTorch.} The bold line denotes the \texttt{relu4\_2}, \texttt{relu5\_1}, and \texttt{final} layers whose outputs are applied as node features, edge features, and global features.}
   \resizebox{0.8\linewidth}{!}{
    \begin{tabular}{|c|c|c|c|c|c|c|c|}
    \hline
                       & \textbf{Layer} & \textbf{Channels} & \textbf{Kernel} & \textbf{}           & \textbf{Layer}     & \textbf{Channels}   & \textbf{Kernel} \\ \hline
    \multirow{7}{*}{1} & Conv2d         & (3, 64)           & (3, 3)          & \multirow{2}{*}{3}  & ReLU               & (256, 256)          & -               \\
                       & BatchNorm2d    & (64, 64)          & -               &                     & MaxPool2d          & -                   & 2               \\ \cline{5-8} 
                       & ReLU           & (64, 64)          & -               & \multirow{10}{*}{4} & Conv2d             & (256, 512)          & (3, 3)          \\ \cline{2-4}
                       & Conv2d         & (64, 64)          & (3, 3)          &                     & BatchNorm2d        & (512, 512)          & -               \\
                       & BatchNorm2d    & (64, 64)          & -               &                     & ReLU               & (512, 512)          & -               \\ \cline{6-8} 
                       & ReLU           & (64, 64)          & -               &                     & \textbf{Conv2d}    & \textbf{(512, 512)} & \textbf{(3, 3)} \\ \cline{2-4}
                       & MaxPool2d      & -                 & 2               &                     & BatchNorm2d        & (512, 512)          & -               \\ \cline{1-4}
    \multirow{7}{*}{2} & Conv2d         & (64, 128)         & (3, 3)          &                     & ReLU               & (512, 512)          & -               \\ \cline{6-8} 
                       & BatchNorm2d    & (128, 128)        & -               &                     & Conv2d             & (512, 512)          & (3, 3)          \\
                       & ReLU           & (128, 128)        & -               &                     & BatchNorm2d        & (512, 512)          & -               \\ \cline{2-4}
                       & Conv2d         & (128, 128)        & (3, 3)          &                     & ReLU               & (512, 512)          & -               \\ \cline{6-8} 
                       & BatchNorm2d    & (128, 128)        & -               &                     & MaxPool2d          & -                   & 2               \\ \cline{5-8} 
                       & ReLU           & (128, 128)        & -               & \multirow{10}{*}{5} & \textbf{Conv2d}    & \textbf{(512, 512)} & \textbf{(3, 3)} \\ \cline{2-4}
                       & MaxPool2d      & -                 & 2               &                     & BatchNorm2d        & (512, 512)          & -               \\ \cline{1-4}
    \multirow{8}{*}{3} & Conv2d         & (128, 256)        & (3, 3)          &                     & ReLU               & (512, 512)          & -               \\ \cline{6-8} 
                       & BatchNorm2d    & (256, 256)        & -               &                     & Conv2d             & (512, 512)          & (3, 3)          \\
                       & ReLU           & (256, 256)        & -               &                     & BatchNorm2d        & (512, 512)          & -               \\ \cline{2-4}
                       & Conv2d         & (256, 256)        & (3, 3)          &                     & ReLU               & (512, 512)          & -               \\ \cline{6-8} 
                       & BatchNorm2d    & (256, 256)        & -               &                     & Conv2d             & (512, 512)          & (3, 3)          \\
                       & ReLU           & (256, 256)        & -               &                     & BatchNorm2d        & (512, 512)          & -               \\ \cline{2-4}
                       & Conv2d         & (256, 256)        & (3, 3)          &                     & ReLU               & (512, 512)          & -               \\ \cline{6-8} 
                       & BatchNorm2d    & (256, 256)        & -               &                     & \textbf{MaxPool2d} & \textbf{-}          & \textbf{2}      \\ \hline
    \end{tabular}
   }
    \label{tab:vgg16_bn}
\end{table*}

We utilize the feature extractor described in \citet{rolinek2020deep} with a few modifications. The process is outlined below:
\begin{itemize}[leftmargin=*,itemsep=2pt,topsep=0pt,parsep=0pt]
    \item Compute the outputs of $\texttt{relu4\_2}$, $\texttt{relu5\_1}$ of the VGG16~\citep{simonyanICLR14vgg} network pre-trained on ImageNet~\citep{krizhevsky2012imagenet}, to obtain feature $\mbff_1$ and $\mbff_2$, respectively. These features are then concatenated to create the final CNN feature $\mbff$:
    \begin{equation}
        \mbff = \texttt{CONCAT}(\mbff_1,\mbff_2)
    \end{equation} 
    The detailed network structure and parameters are shown in Table.~\ref{tab:vgg16_bn}.
    \item Feed the obtained feature $\mbff$ and the graph adjacency $\mca$ into the geometric feature refinement component. The graph adjacency $\mca$ is generated using Delaunay triangulation~\citep{delaunay1934sphere} based on keypoint locations. We apply SplineConv~\citep{FeyCVPR18} to encode higher-order information and the geometric structure of the entire graph into node-wise features $\mbff^{n}$:
    \begin{equation}
        \mbff^{n} = \texttt{SplineConv}(\mbff, \mca)
    \end{equation}
    The Spline Conv operation is calculated as follows:
    \begin{equation} 
        \mbff^{n}_i = \frac{1}{| \mathcal{N}(i)|} \sum_{j \in \mathcal{N}(i)} \mbff^{n}_j \cdot h_{\Theta}(\mbfe_{i,j})
    \end{equation} 
    where $\mbff^{n}_i$ represents the node feature of $v_i$, $\mathcal{N}(i)$ denotes the neighborhood of $v_i$, $\mbfe_{i,j}$ stands for the edge feature of the edge between $v_i$ and $v_j$, and $h_{\Theta}$ denotes a kernel function defined over the weighted B-Spline tensor product basis.
\end{itemize}

\subsection{Construction of Affinity Matrix $\mbfk$}
In our paper, we discuss two types of affinities: the traditional $\mbfk^{raw}$ and the learned $\mbfk^{learn}$. The former is employed in all learning-free experiments, while both are utilized in the \namenn\ process.

The construction of $\mbfk^{raw}$ adheres to the standard protocol used in previous works ~\citep{ZhouPAMI16,yan2015consistency,YanPAMI16,JiangPAMI21,WangAAAI20}. $\mbfk^{raw}$ has no node affinity and relies on edge affinity, which is computed based on two factors: length similarity and angle similarity. For each edge $e=((x_1, y_1), (x_2,y_2))$, its length feature $d_e$ is calculated as $d_e=\sqrt{(x_1-x_2)^2 + (y_1-y_2)^2}$, and its angle feature $\theta_e$ is computed as $\theta_e = \tan^{-1}(\frac{y_1-y_2}{x_1-x_2 + \epsilon})$. The edge affinity is determined using the following formula:
\begin{equation}
k^{raw}{e_1, e_2} = \exp \left(-\frac{1}{\sigma^2} \left(\beta \abs{d_{e_1} - d_{e_2}} + (1-\beta) \abs{\theta_{e_1} - \theta_{e_2}}\right)\right)
\end{equation}

The learned affinity $\mbfk^{learn}$ is extracted using a deep learning model, following the standard pipeline ~\citep{rolinek2020deep,wang2020graduated}. Node features $\mbff^{n}_{i}$ are obtained through VGG16 and SplineConv, while edge features are constructed as:
\begin{equation}
\mbff^{e}_{ij} = \mbff^{n}_{i} - \mbff^{n}_{j}
\end{equation}

The learned affinity matrix $\mbfk$ is computed as:
\begin{equation}
\begin{aligned}
\mbfk_{u} = (\mbff^{n})^{\top} \mathbf{\Lambda} \mbff^{n}, \quad
\mbfk_{q} = (\mbff^{e})^{\top} \mathbf{\Lambda} \mbff^{e}
\end{aligned}
\end{equation}
Here, $\mbfk_{u}$ represents unary affinity, $\mbfk_{q}$ denotes quadratic affinity, and $ \mathbf{\Lambda}$ is set to $\textbf{I}$ for stable training.
\namenn\ constructs the affinity matrix through a combination:
\begin{equation}\label{eq:weight_K}
\mbfk = \mbfk^{learn} + \alpha \mbfk^{raw},
\end{equation}
where $\alpha$ is used to adjust the weight of $\mbfk^{raw}$. Further parameter details are provided in Sec.~\ref{sec:parameter}.

\subsection{Parameter Settings}\label{sec:parameter}
\begin{table*}[tb!]
    \centering
    \caption{Parameters of UM3C.}
    \resizebox{\linewidth}{!}{
        \begin{tabular}{r|cc|cc|l}
        \toprule
        \multirow{2}{*}{param}    & \multicolumn{2}{c|}{WillowObject}        & \multicolumn{2}{c|}{PascalVOC} & \multirow{2}{*}{description}       \\
                                  & $3\times 8$       & $5 \times 10$        & $3 \times 8$ & $5 \times 10$ &                              \\ \midrule
        lr                        & $10^{-3}$         & $10^{-3}$            & $10^{-3}$ / $10^{-5}$ & $10^{-3}$ / $10^{-5}$ & learning rate                           \\
        lr-steps                  & $\set{100,500}$   & $\set{100,500}$      & $\set{100,500}$ & $\set{100,500}$ & lr/=10 at these number of iterations \\ \midrule
        \multirow{2}{*}{$\alpha$} & train: $1$        & train: $1$           & train: $0.5$  & train: $0.5$ & \multirow{2}{*}{weight of $\mbfk^{raw}$}    \\
                                  & test: $0$         & test: $1$            & test: $0.5$  & test: $0.5$ &                                         \\
        $\beta$                   & 0.9               & 0.9                  & 0.9 & 0.9 & weight parameter in $\mbfk^{raw}$               \\
        $\sigma^2$                & 0.03              & 0.03                 & 0.03 & 0.03 & the scaling factor in $\mbfk^{raw}$         \\
        $T$                       & 2                 & 2                    & 2 & 2 & max iterations of M3C                   \\ \bottomrule
        \end{tabular}
   }
    \label{tab:parameter_um3c}  
\end{table*}
The detailed configuration of our model parameters is listed in Table~\ref{tab:parameter_um3c}, which are tuned based on their performance on the training data. The parameter $\beta$ and $\sigma$ for $\mbfk^{raw}$ follows the parameter used for traditional solvers. The max iteration $T$ is chosen based on the performance, convergence, and time cost of \name. For $\alpha$, we found that as $\mbfk^{learn}$ gets better when the training proceeds, a less desirable $\mbfk^{raw}$ would harm the performance of the solver under the simpler setting where $N_c=3, N_g=8$, but is still instructive under more complex setting where $N_c=5, N_g=10$. This also means that given more training categories, there is still room for improvement in unsupervised learning methods.

\section{Experiment Details}

\subsection{Datasets}
\label{sec:datasets_detail}

We conducted experiments on two datasets: Willow Object Class~\citep{ChoICCV13} and Pascal VOC Keypoint~\citep{PascalVOC}.

The Willow Object Class dataset comprises 304 images gathered from Caltech-256~\citep{caltech256} and Pascal VOC 2007~\citep{everingham2007pascal}. These images span 5 categories: 208 faces, 50 ducks, 66 wine bottles, 40 cars, and 40 motorbikes. Each image is annotated with 10 keypoints, and we introduced random outliers for robustness tests.

The Pascal VOC Keypoint dataset features natural images from 20 classes in VOC 2011~\citep{PascalVOC}, with additional keypoint labels provided by \citet{Bourdev2009PoseletsBP}. To tailor the dataset to the graph matching and clustering problem, we selected 10 classes: aeroplane, bicycle, bird, cat, chair, cow, dog, horse, motorbike, and sheep. We filtered out images with incomplete keypoint counts, ensuring that all remaining images had 9-10 common keypoints for each class. We also added random outliers to ensure that all images consistently contained exactly 10 keypoints. This resulted in a training set of 944 images and an evaluation set of 220 images.

In both datasets, we constructed graphs using Delaunay triangulation. For learning-based models, images were cropped to object bounding boxes and resized to $256 \times 256$ pixels.

\subsection{Evaluation Metric}
\label{sec:metric_detail}
Denote a cluster with a set of graphs $\mcc = \{ \mcg_1 \dots \mcg_{n} \}$. The ground truth cluster division is denoted as $\mcc^{gt}$, and the predicted cluster is denoted as $\mcc$. Moreover, let $\mbfc^{gt} (c^{gt}_{ij})$ denotes the ground truth cluster indicator and $\mbfc (c_{ij})$ denotes the predict cluster indicator. Performance metrics include both matching accuracy and clustering quality:

\paragraph{Matching Accuracy (MA)}  We only consider the intra-cluster matching accuracy and thus by adapting the accuracy for a single cluster, we have 
\begin{equation}
    \text{MA} = \frac{1}{\sum c^{gt}_{ij}} \sum_{ij} c^{gt}_{ij} \cdot \texttt{ACC}(\mbfx_{ij}), 
\end{equation}
where $\texttt{ACC}(\mbfx_{ij})$ denotes accuracy for matching $\mbfx_{ij}$. Here $\mbfc$ refers to an indicator for strict cluster division.

\paragraph{Clustering Purity (CP)}~\citep{schutze2008introduction}: it is given by 
\begin{equation}
    \text{CP} = \frac{1}{N} \sum_{i=1}^{N_c} \max_{j\in \set{1, \ldots, N_c}} \abs{\mcc_i \cap \mcc^{gt}_j},
\end{equation}
where $\mcc'_i$ is the predicted cluster $i$ and $\mcc_j$ is the ground truth cluster $j$, and $N$ is the total number of graphs.

\paragraph{Rand Index (RI)}~\citep{rand1971objective}: 
RI calculates the correct graph pairs overall.
\begin{equation}
    \text{RI} = \frac{1}{N^2} \cdot (\sum_{c_{ij}=1, c^{gt}_{ij}=1} 1 +  \sum_{c_{ij}=0, c^{gt}_{ij}=0} 1 )
\end{equation}
where $\sum_{c_{ij}=1, c^{gt}_{ij}=1} 1$ represents the number of graphs predicted in the same cluster with same label, $\sum_{c_{ij}=0, c^{gt}_{ij}=0} 1$ the number of pairs that are in different clusters with different labels, and it is normalized by the total number of graph pairs $N^2$.

\paragraph{Clustering Accuracy (CA)}~\citep{WangAAAI20}, it is defined by:
\begin{equation}
\text{CA} = 1 - \frac{1}{N_c}  \left( \sum_{\mcc_a} \sum_{\mcc'_a \neq \mcc'_b}  \frac{\abs{\mcc'_a \cap \mcc_a} \abs{\mcc'_b \cap \mcc_a}}{\abs{\mcc_a} \abs{\mcc_a}} \right. \left. + \sum_{\mcc'_a} \sum_{\mcc_a \neq \mcc_b} \frac{\abs{\mcc'_a \cap \mcc_a} \abs{\mcc'_a \cap \mcc_b}}{\abs{\mcc_a} \abs{\mcc_b}} \right)
\end{equation}
where $\mcc_a, \mcc_b$ are the ground truth clusters and $\mcc'_a, \mcc'_b$ denotes prediction.

\subsection{Apply 2GM and MGM on MGMC.}
To apply 2GM and MGM solvers on MGMC, we need to first get the pairwise matching results $\mbx$ from these solvers. Then we will generate respective clustering results $\mbfc$ based on resulting matching $\mbx$ and given affinity matrix $\set{\mbfk_{ij}}$. The details of the clustering algorithm are introduced in Sec.~\ref{sec:cluster}. Both $\mbx$ and $\mbfc$ are used for the evaluation of these solvers.

\subsection{Details for Clustering Algorithm.}\label{sec:cluster}
\SetCommentSty{mycommfont}
\begin{algorithm}[!tb]
    \caption{Clustering algorithm.}
    \label{alg:cluster}
    \SetAlgoLined
    \DontPrintSemicolon
    \SetNoFillComment
	\KwIn{Matching results $\mbx$, affinity matrix $\set{\mbfk_{ij}}$.}
	Obtain the affinity score for each graph pair: $J_{ij}=\vct(\mbfx_{ij})^{\top} \mbfk_{ij} \vct(\mbfx_{ij})$;\\
	Sparsification on the affinity score: $\set{{J}^k_{ij}} = \text{KNN}(\set{J_{ij}}, k)$;\\
	Apply clustering algorithm, \eg{} Spectral Clustering~\citep{ng2002spectral}, Multi-Cut~\citep{swoboda2017message}, on $\set{{J}^k_{ij}}$ to get $\mbfc$.\\
	\KwOut{Cluster $\mbfc$.}
\end{algorithm}
\setlength{\textfloatsep}{5pt}
For all solvers (2GM, MGM, and MGMC), we adhere to the same clustering procedure outlined in Alg.~\ref{alg:cluster}. The first step involves computing the affinity score $J_{ij}$ for each pair of graphs.
To enhance the effectiveness of clustering, we employ a sparsification technique, consistent with the pre-processing approach in \citet{WangAAAI20}, aimed at obtaining a more efficient input matrix. Specifically, when dealing with a pair of two graphs, if one graph is not among the $k$-nearest neighbors of the other, we set their corresponding scores $J_{ij}$ (and $J_{ji}$) to zero. The parameter $k$ is consistently set to $10$ for all tests. The resulting sparsified affinity score is denoted as $J^k_{ij}$.

\section{Additional Experiments}

\subsection{Varying Cluster Number and Cluster Size}
\label{sec:varying_setting}

\begin{table*}[b!]
	\centering
	\caption{Evaluation of matching and clustering accuracy by varying the number of clusters, and number of graphs in each cluster on WillowObj. MA and CA are used for matching accuracy and clustering accuracy, respectively.}
    \label{tab:varying_setting}
	    \resizebox{\textwidth}{!}
    {
\begin{tabular}{l|c|cc|cc|cc|cc|cc|cc}
\toprule[2pt]
$N_c \times N_g$                &          & \multicolumn{2}{c|}{$3\times 20$, 2 outliers} & \multicolumn{2}{c|}{$4\times 20$, 2 outliers} & \multicolumn{2}{c|}{$5\times 20$, 2 outliers} & \multicolumn{2}{c|}{$5\times 15$, 2 outliers} & \multicolumn{2}{c|}{$5\times 10$, 2 outliers} & \multicolumn{2}{c}{$3\times [20,10,5]$, 2 outliers} \\
Metrics                         & Learning & MA $\uparrow$         & CA $\uparrow$         & MA $\uparrow$         & CA $\uparrow$         & MA $\uparrow$         & CA $\uparrow$         & MA $\uparrow$         & CA $\uparrow$         & MA $\uparrow$         & CA $\uparrow$         & \ \ \ \ MA $\uparrow$        & CA $\uparrow$        \\ \midrule
RRWM & free     & 0.658                 & 0.932                 & 0.642                 & 0.858                 & 0.665                 & 0.790                 & 0.648                 & 0.693                 & 0.664                 & 0.679                 & \ \ \ \ 0.633                & 0.681                \\
CAO-C          & free     & 0.849                 & 0.946                 & 0.812                 & 0.855                 & 0.820                 & 0.790                 & 0.801                 & 0.708                 & 0.804                 & 0.679                 & \ \ \ \ 0.787                & 0.757                \\
MGM-Floyd    & free     & 0.845                 & 0.945                 & 0.812                 & 0.878                 & 0.819                 & 0.807                 & 0.798                 & 0.727                 & 0.799                 & 0.707                 & \ \ \ \ 0.778                & 0.755                \\
DPMC          & free     & \textbf{0.867}        & 0.942                 & 0.827                 & 0.894                 & 0.775                 & 0.772                 & 0.739                 & 0.713                 & 0.756                 & 0.744                 & \ \ \ \ \textbf{0.795}       & 0.823                \\
\name-hard                   & free     & 0.758                 & \textbf{0.966}        & 0.782                 & 0.908                 & 0.726                 & 0.824                 & 0.710                 & 0.753                 & 0.722                 & 0.719                 & \ \ \ \ 0.727                & 0.744                \\
\textbf{\name~(ours)}           & free     & 0.857                 & 0.961                 & \textbf{0.851}        & \textbf{0.933}        & \textbf{0.835}        & \textbf{0.900}        & \textbf{0.812}        & \textbf{0.805}        & \textbf{0.809}        & \textbf{0.780}        & \ \ \ \ 0.792                & \textbf{0.881}       \\ \midrule[2pt]
GANN   & unsup.   & 0.532                 & 0.834                 & 0.589                 & 0.801                 & 0.528                 & 0.784                 & 0.551                 & 0.802                 & 0.552                 & 0.827                 & \ \ \ \ 0.475                & 0.802                \\
\textbf{\namenn~(ours)}         & unsup.   & \textbf{0.874}        & \textbf{0.992}        & \textbf{0.897}        & \textbf{0.981}        & \textbf{0.879}        & \textbf{0.972}        & \textbf{0.876}        & \textbf{0.975}        & \textbf{0.878}        & \textbf{0.975}        & \ \ \ \ \textbf{0.872}       & \textbf{0.984}       \\ \bottomrule[2pt]
\end{tabular}
    }   
\end{table*}

We assess the model's generalization ability concerning the number of graphs and clusters. For $N_c = 3$, the categories consist of car, motorbike, and wine bottle. For $N_c = 4$, additional categories include face. We also investigate unbalanced cluster sizes for $N_c = 3$, comprising 20 cars, 10 motorbikes, and 5 wine bottles. Both GANN and \namenn\ are trained with $N_c=5$ and $N_g=10$, excluding outliers. During testing, two outliers are randomly added to the graph in all settings.

Table~\ref{tab:varying_setting} demonstrates the robustness of our methods with varying cluster and graph numbers.	
Our learning-free solver, \name, exhibits competitive performance compared to DPMC, with a matching accuracy ranging from $1\%$ loss to $2\%$ gain, and clustering accuracy improvement ranging from $2\%$ to $9\%$.	
These achievements further reflect the superiority of our proposed ranking scheme.

Notably, our \namenn\ outperforms in all performance metrics.	
It consistently achieves a cluster accuracy above $0.97$ and a matching accuracy exceeding $0.87$ across all settings, representing a $1\sim 7\%$ improvement in matching accuracy and $2\sim 18\%$ enhancement in clustering accuracy. 
The fact that \namenn's training setting differs from testing settings validates the strong generalization ability of our method across varying numbers of graphs, clusters, and the presence of outliers. Furthermore, our method, although trained under simpler conditions, can be effectively deployed in more complex scenarios, yielding satisfactory performance.	

\subsection{Comparison of Different Ranking Schemes}
\label{sec:rank_schemes}

\begin{figure*}[tb!]
    \centering
    \includegraphics[width=\textwidth]{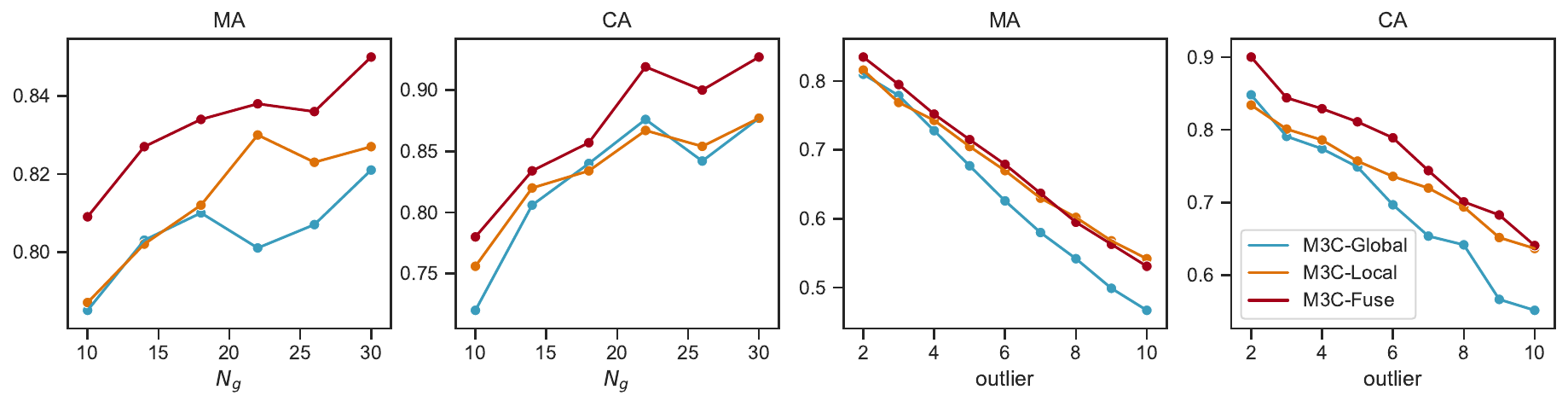}
    \caption{Comparison by four metrics of the three proposed ranking schemes (local, global, and fuse in Section~\ref{sec:our}) on the Willow ObjectClass dataset, by varying the cluster size $N_g$ and number of outliers.}
    \label{fig:self_comp}
\end{figure*}
We compare three proposed ranking schemes across various test settings.	
\threshold, \KNN, and \Ranksum\ refer to global-rank, local-rank, and fuse-rank, respectively.
We vary the number of graphs $N_g$ within each cluster while keeping the cluster number fixed at $N_c=5$, and we also vary the number of outliers in each graph while maintaining $N_c=5$ and $N_g=20$.

Our findings from Fig.~\ref{fig:self_comp} indicate that \Ranksum\ outperforms all other methods, leading us to select \Ranksum\ as the solver for our unsupervised model, \namenn. These results also confirm that both global and local ranking schemes serve as effective approximations. Furthermore, this demonstrates the robustness and generalization ability of our ranking methods. Even in the presence of $10$ outliers, our method achieves a matching accuracy exceeding $0.5$ and a clustering accuracy surpassing $0.65$. Additionally, the performance of our methods improves in both matching and clustering accuracy as the number of graphs increases. This observation also explains why \name\ does not outperform other learning-free solvers in Table~\ref{tab:main_test} (in simpler settings) but demonstrates significant superiority in Table~\ref{tab:varying_setting} (in more complex settings).

\subsection{Comparison of Different Clustering Algorithms}
\label{sec:comparison_of_clustering_alg}
In previous experiments, for all solvers (under the settings of 2GM, MGM, and MGMC), we adopt the same procedure for clustering. The first step involves computing the affinity score $J_{ij}$ for each pair of graphs. To sparsify the affinity scores, we select only the 10 nearest neighbors for each graph and mask other pairwise affinities, following the approach in \citet{WangAAAI20} to obtain a more effective input matrix. Subsequently, we employ a clustering algorithm on the sparse affinity $\set{J^k_{ij}}$. 

We now conduct a comparison between the clustering algorithms Spectral Clustering~\citep{ng2002spectral} and Multi-Cut~\citep{swoboda2017message} applied to two well-established traditional algorithms: MGM-Floyd and \Ranksum. We hope this comparison justifies our choice of clustering algorithm.

Table~\ref{tab:clustering} presents the performance of all four combinations. As a result, there is no substantial alternation in clustering performance. As both~\citep{WangAAAI20, wang2020graduated} utilized spectral clustering, to ensure a fair comparison, we adhere to their protocol and employ spectral clustering in our primary experiments.

Furthermore, we posit that the key to achieving effective clustering lies in obtaining high-quality matching and forming reliable affinity scores for clustering. Multi-Cut, as well as Spectral Clustering, represents just one approach to produce robust clustering. The clustering visualization of different methods is shown in Fig.~\ref{fig:vis_cluster}.

\begin{table*}[tb!]
    \centering
    \caption{Comparison of Spectral Clustering and Multi-Cut on learning-free solvers under the setting of $N_C=3$, $N_g=8$, and $n_o=2$ outliers. Algorithms with `-MC' use multicut in clustering.}
        \begin{tabular}{cc|cccc}
        \toprule
        \multicolumn{2}{c|}{Method}                                                   & MA $\uparrow$ & CA $\uparrow$ & CP $\uparrow$ & RI $\uparrow$ \\ \midrule
        \multicolumn{1}{c|}{\multirow{2}{*}{With Spectral Clustering}} & MGM-Floyd    & 0.709         & 0.567         & 0.673         & 0.699         \\
        \multicolumn{1}{c|}{}                                          & \name       & 0.687         & 0.653         & 0.750         & 0.758         \\ \midrule
        \multicolumn{1}{c|}{\multirow{2}{*}{With Multi-Cut}}           & MGM-Floyd-MC & 0.709         & 0.603         & 0.716         & 0.724         \\
        \multicolumn{1}{c|}{}                                          & \name-MC    & 0.687         & 0.634         & 0.734         & 0.745         \\ \bottomrule
        \end{tabular}
    \label{tab:clustering}   
\end{table*}

\begin{figure*}[tb!]
    \begin{center}
        \includegraphics[width=0.85\textwidth]{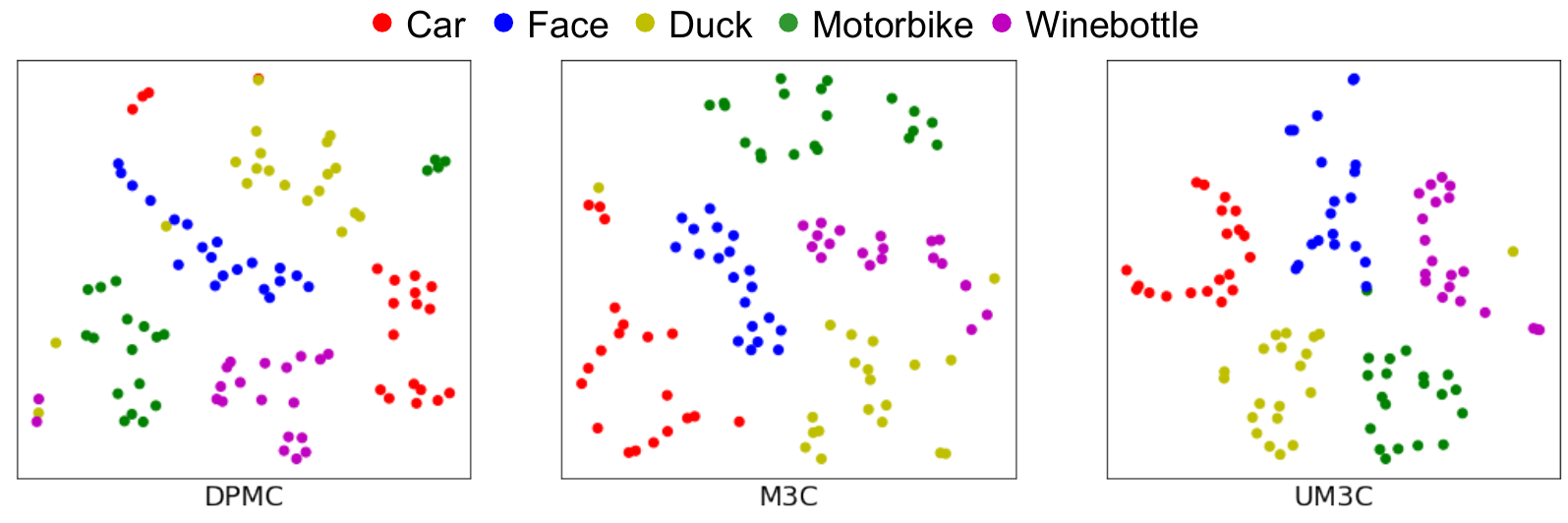}
    \end{center}
    \caption{Cluster visualization by projecting into 2-D space. We show the spectral embedding of different methods: DPMC, M3C, and UM3C under $5\times 20, 2$ outliers setting. The embedding is obtained based on pairwise affinity score and the dimension of the embedding space is 16. We apply t-SNE to reduce the dimension to 2 to draw the visualization figures. 
    }
    \label{fig:vis_cluster}
\end{figure*}

\subsection{Convergence Study of M3C}
\label{sec:convergence_study}
\begin{wrapfigure}{r}{0.45\linewidth}
\begin{center}
	\includegraphics[width=\linewidth]{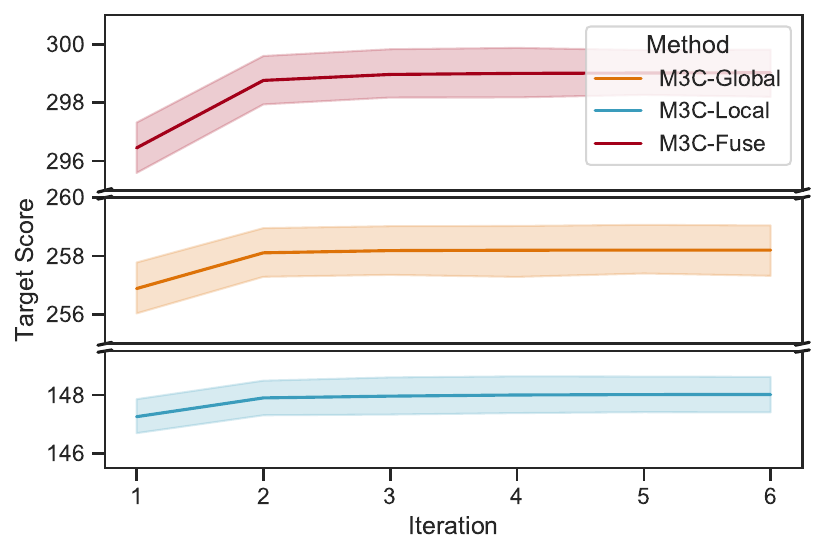}
\end{center}
\caption{Convergence curve of M3C. The rates are shown under three different schemes of ranking: \threshold, \KNN, and \Ranksum.}
\label{fig:convergence}
\end{wrapfigure}

We experiment to show the changes in the supergraph structure of \name-hard, \name, and DPMC per iteration in Table~\ref{tab:structure}.  
In the case of the two \name\ variants, the structure refers to the cluster indicator (also the corresponding supergraph), whereas for DPMC, it pertains to the designed tree structure. The number is the edges changed per iteration, which is calculated by $\sum |\mbfa^{(t+1)} - \mbfa^{(t)}|$ where $\mbfa$ is the adjacency matrix of the respective supergraph. 
It is evident that DPMC oscillates without convergence, while \name-hard converges rapidly to a local optimum. \name\ exhibits a more balanced convergence rate, leading to its well-balanced performance.

\begin{table*}[tb!]
\begin{center}
    \caption{Changes in supergraph structure (measured by the number of changed edges per iteration $\sum \abs{\mbfa^{(t+1)} - \mbfa^{t}}$) over iterations under the setting of $N_c=3$, $N_g=8$ with $2$ outliers as disturbance. For \name, the structure is the cluster indicator $\hmbfc$, and for DPMC, the structure is the maximum spanning tree.} 
        \begin{tabular}{l|ccccccccc}
        \toprule
        Iteration \#   & 2     & 3    & 4    & 5    & 6    & 7    & 8    & 9    & 10   \\ \midrule
        \name-hard & 10.48 & 0.56 & 0    & 0    & 0    & 0    & 0    & 0    & 0    \\
        \name & 20.44 & 2.04 & 1.56 & 0.24 & 0.04 & 0.08 & 0    & 0    & 0    \\
        DPMC & 10.16 & 6.16 & 3.28 & 1.20 & 0.48 & 0.32 & 0.24 & 0.24 & 0.24 \\
        \bottomrule
        \end{tabular}
    \label{tab:structure}
\end{center}
\end{table*}

Additionally, we validate the convergence of the three \name\ variants using different ranking schemes: \name-Global, \name-Local, and \name-Fuse. The experiment is conducted on the Willow ObjectClass dataset, with the settings of $N_c=5$, $N_g=20$, and the presence of two outliers. We iterate each algorithm for $6$ cycles and report the mean and standard deviation of the curves based on $50$ repetitions. The hyperparameters $r$ for \threshold, \KNN, and \Ranksum\ are set to $0.05$, $0.04$, and $0.06$, respectively. The results are depicted in Fig.~\ref{fig:convergence}. They validate that our algorithm achieves rapid convergence within a few iterations. In the case of each algorithm, it attains a near-optimal target score by the second iteration.	This supports the earlier assessment of supergraph structure convergence: the second iteration witnesses a significant number of edge changes, which diminishes in the third iteration but still allows room for further enhancement.	It is important to note that variations in target scores are a consequence of selecting different values of $r$ for each algorithm.

\subsection{Hyperparameter Study of M3C}
\label{sec:hyperparameter}

\begin{figure*}[tb!]
    \centering
    \includegraphics[width=\textwidth]{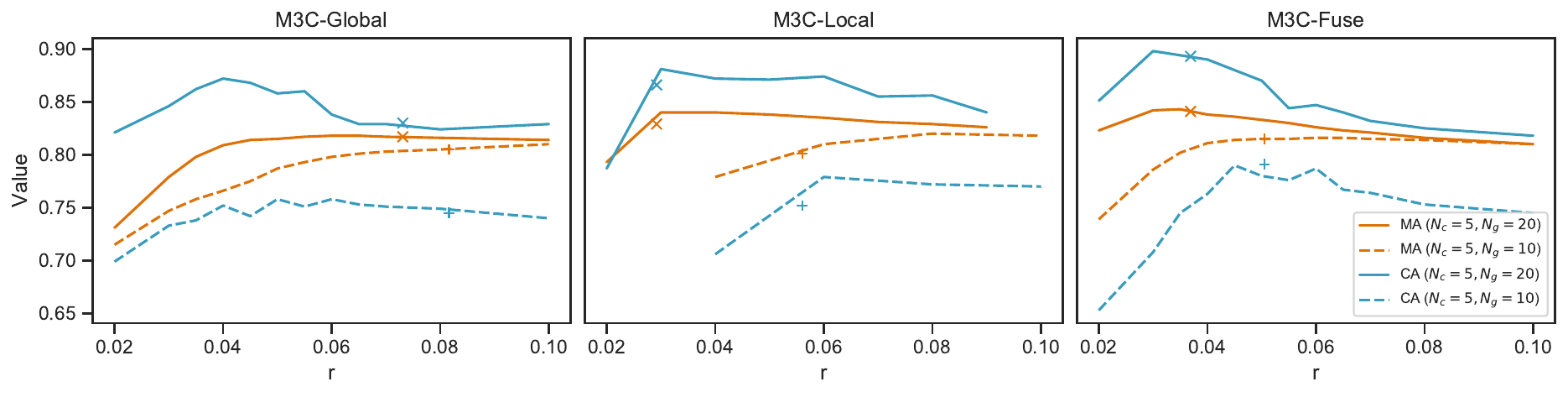}
    \caption{Sensitivity study of the hyperparameter $r$ for the devised three ranking schemes. Experiments are conducted under the setting $N_g=5$, $N_c=20$ and $N_g=5$, $N_c=10$, both with $2$ outliers, on Willow ObjectClass. The marker denotes the performance of our chosen $r$, which is to add edges until the supergraph is connected.}
    \label{fig:parameter_r}
\end{figure*}

The major hyperparameter for \name\ is $r$, which controls the number of graph pairs considered as belonging to the same cluster and determines the number of edges in the supergraph.
In this section, we first investigate the sensitivity of the hyperparameter $r$ for \name-Global, \name-Local, and \name-Fuse, and subsequently, we present our tuning algorithm.

Figure~\ref{fig:parameter_r} illustrates the matching and clustering performance varying the hyperparameter $r$, considering two settings: $N_c=5$, $N_g=20$, and $N_c=5$, $N_g=10$, each with $n_o=2$ outliers. It is evident that the matching performance of \threshold\ remains stable when $r>0.04$ for $N_c=5$, $N_g=20$, and $r > 0.06$ for $N_c=5$, $N_g=10$. However, its clustering performance deteriorates when $r\geq 0.6$ in both settings. This observation implies that the threshold should be within a reasonable range, as merely adding more edges does not necessarily improve performance. Conversely, having too few edges restricts the algorithm's optimization space. The findings from \KNN\ and \Ranksum\ further support this observation. As depicted in Fig.~\ref{fig:parameter_r}, they achieve optimal results at $r=0.15$ and $r=0.03$ for the $5\times 20$ setting, and $r=0.3$ and $r=0.045$ for the $5\times 10$ setting.	

Additionally, it is worth noting that the optimal $r$ varies for different inputs and settings, and determining the best $r$ for each input can be a time-consuming process. Consequently, we employ an alternative approach to address this challenge. Rather than fixing a specific value for $r$, we dynamically add edges based on their rank until the supergraph becomes connected. The symbols `$\times$' and `$+$' in Fig.~\ref{fig:parameter_r} represent the mean value of $r$ and the corresponding mean performance achieved by this scheme in two settings, respectively.	These empirical results demonstrate that this approach provides a reliable approximation of the optimal $r$, enabling the algorithm to attain near-optimal performance without extensive computation. This is the method employed in both our conventional solver, \name, and the unsupervised learning method, \namenn.

\subsection{Generalization Test of learned affinity $\mbfk^{learn}$}
\label{sec:generalization_of_k}

\begin{figure}[tb!]
    \centering
    \includegraphics[width=0.7\linewidth]{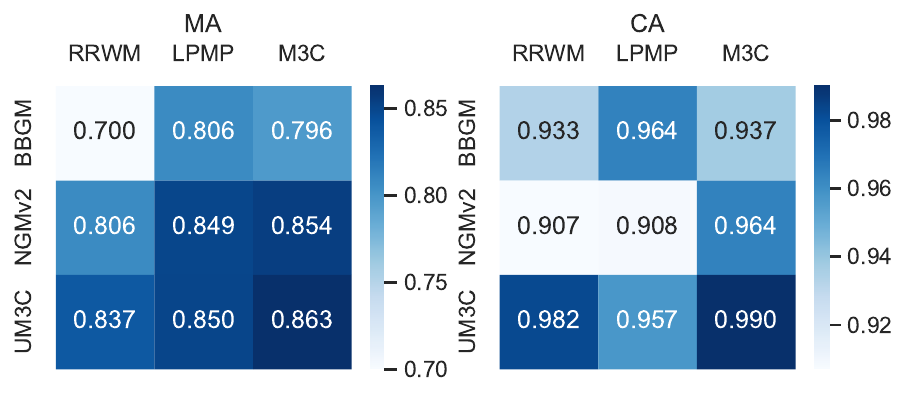}
    \caption{Generalization test of learned affinity under $N_c=3$ $N_g=8$, $n_o=2$ outliers on WillowObject. The x-axis is the solver used, and the y-axis is the learning model of the learned affinity. All models are learned under the same setting as the tests.}
    \label{fig:gen}
\end{figure}
We conducted experiments to evaluate the generalization capability of our learned affinity, demonstrating how our edge-wise affinity loss enhances the robustness of affinity across different solvers. The experiments were conducted on Willow ObjectClass under a $3\times 8$ setting with 2 outliers, employing the solvers RRWM~\citep{Cho2010ReweightedRW} (used in \name), LPMP~\citep{swoboda2017study} (utilized in BBGM), and \name, as well as the affinities learned by BBGM, NGMv2, and \namenn. In the case of \namenn, only $\mbfk^{learn}$ was utilized for testing.

As illustrated in Fig.~\ref{fig:gen}, \namenn\ exhibited superior generalization capabilities in terms of both matching and clustering accuracy. BBGM's learning pipeline limited its applicability to the LPMP solver, offering less utility for other solvers. Furthermore, the affinity generated by \namenn\ significantly improved the performance of LPMP compared to that generated by BBGM. These results affirm that our edge-wise affinity learning enhances the robustness of the learned affinity, making it adaptable to various solvers.

\section{Visualizations}
\label{sec:more_visualization}
\begin{figure*}[h]
    \begin{center}
        \includegraphics[width=\textwidth]{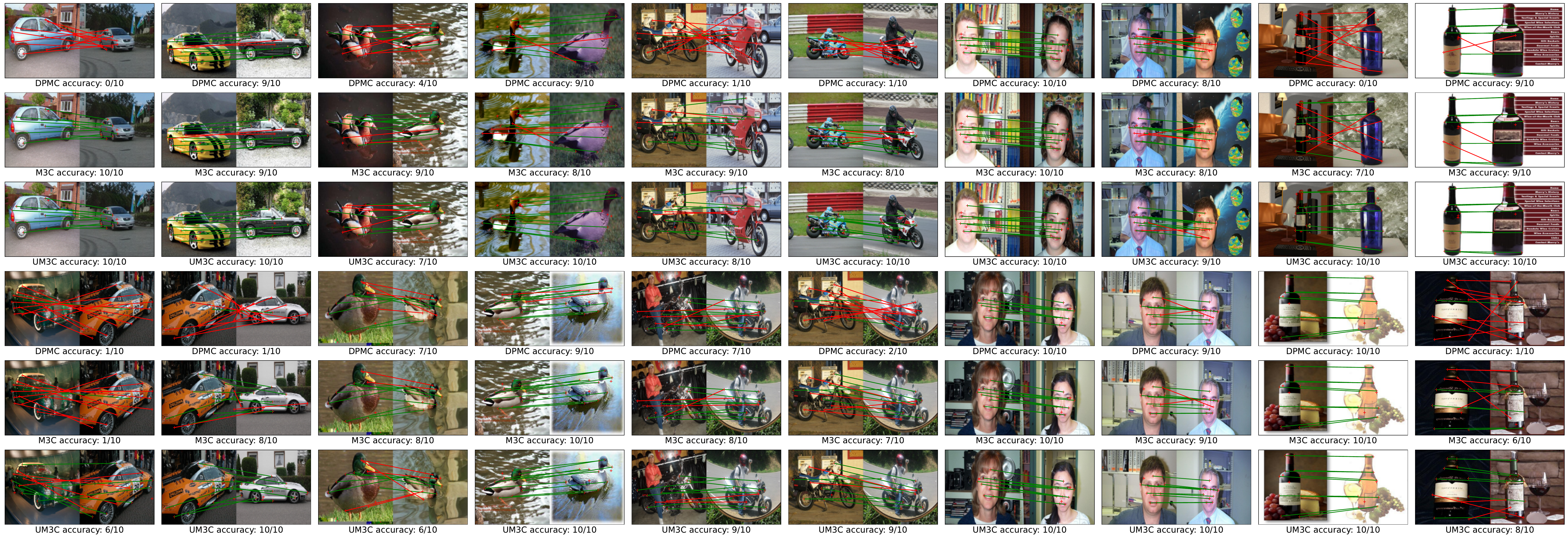}
    \end{center}
    \vspace{-10pt}
    \caption{Comparison of different methods: DPMC(top), M3C(Middle), and UM3C(bottom). It is run on the setting with $N_c=5$ and $N_g=20$ and 2 outliers. Accuracy is reported for each pairwise matching. All the pairs are randomly picked. Better viewing with color and zooming in.}
    \label{fig:vis_method_compare}
\end{figure*}

\begin{figure*}[h]
    \begin{center}
        \includegraphics[width=\textwidth]{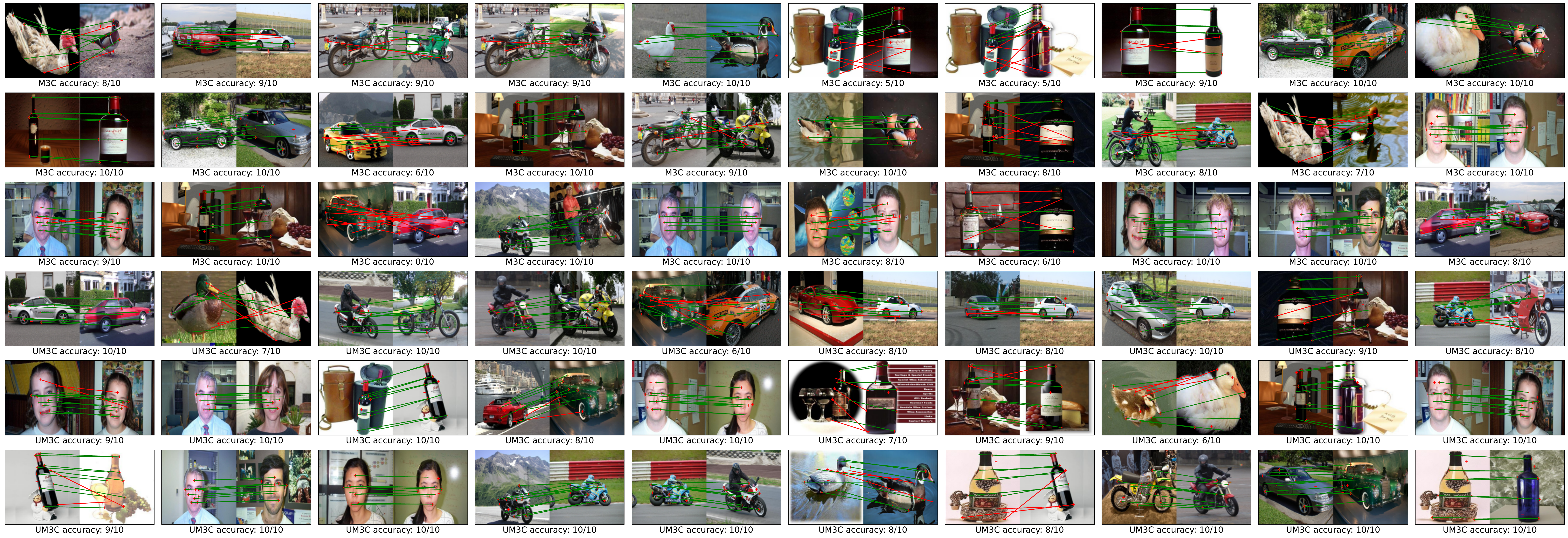}
    \end{center}
    \vspace{-10pt}
    \caption{Visualization of our methods: M3C(top) and UM3C(bottom). It is run on the setting with $N_c=5$ and $N_g=20$ graphs and 2 outliers. Accuracy is reported for each pairwise matching. All the pairs are randomly picked. Better viewing with color and zooming in.}
    \label{fig:vis_our_method}
\end{figure*}


\begin{thebibliography}{63}
\providecommand{\natexlab}[1]{#1}
\providecommand{\url}[1]{\texttt{#1}}
\expandafter\ifx\csname urlstyle\endcsname\relax
  \providecommand{\doi}[1]{doi: #1}\else
  \providecommand{\doi}{doi: \begingroup \urlstyle{rm}\Url}\fi

\bibitem[Bai et~al.(2019)Bai, Ding, Bian, Chen, Sun, and Wang]{BaiWSDM19}
Yunsheng Bai, Hao Ding, Song Bian, Ting Chen, Yizhou Sun, and Wei Wang.
\newblock Simgnn: A neural network approach to fast graph similarity
  computation.
\newblock In \emph{Proceedings of the Twelfth ACM International Conference on
  Web Search and Data Mining}, pp.\  384--392, 2019.

\bibitem[Berg et~al.(2005)Berg, Berg, and Malik]{berg2005shape}
Alexander~C Berg, Tamara~L Berg, and Jitendra Malik.
\newblock Shape matching and object recognition using low distortion
  correspondences.
\newblock In \emph{2005 IEEE computer society conference on computer vision and
  pattern recognition (CVPR'05)}, volume~1, pp.\  26--33. IEEE, 2005.

\bibitem[Bourdev \& Malik(2009)Bourdev and Malik]{Bourdev2009PoseletsBP}
Lubomir~D. Bourdev and Jitendra Malik.
\newblock Poselets: Body part detectors trained using 3d human pose
  annotations.
\newblock \emph{2009 IEEE 12th International Conference on Computer Vision},
  pp.\  1365--1372, 2009.

\bibitem[Chen et~al.(2014)Chen, Guibas, and Huang]{chen2014near}
Yuxin Chen, Leonidas Guibas, and Qixing Huang.
\newblock Near-optimal joint object matching via convex relaxation.
\newblock In Eric~P. Xing and Tony Jebara (eds.), \emph{Proceedings of the 31st
  International Conference on Machine Learning}, volume~32 of \emph{Proceedings
  of Machine Learning Research}, pp.\  100--108, Bejing, China, 22--24 Jun
  2014. PMLR.

\bibitem[Cho et~al.(2010)Cho, Lee, and Lee]{Cho2010ReweightedRW}
Minsu Cho, Jungmin Lee, and Kyoung~Mu Lee.
\newblock Reweighted random walks for graph matching.
\newblock In \emph{European conference on Computer vision}, pp.\  492--505.
  Springer, 2010.

\bibitem[Cho et~al.(2013)Cho, Alahari, and Ponce]{ChoICCV13}
Minsu Cho, Karteek Alahari, and Jean Ponce.
\newblock Learning graphs to match.
\newblock In \emph{ICCV}, 2013.

\bibitem[De~La~Vega \& Kenyon(2001)De~La~Vega and Kenyon]{de2001randomized}
W~Fernandez De~La~Vega and Claire Kenyon.
\newblock A randomized approximation scheme for metric max-cut.
\newblock \emph{Journal of computer and system sciences}, 63\penalty0
  (4):\penalty0 531--541, 2001.

\bibitem[Delaunay et~al.(1934)]{delaunay1934sphere}
Boris Delaunay et~al.
\newblock Sur la sphere vide.
\newblock \emph{Izv. Akad. Nauk SSSR, Otdelenie Matematicheskii i Estestvennyka
  Nauk}, 7\penalty0 (793-800):\penalty0 1--2, 1934.

\bibitem[Demirci et~al.(2006)Demirci, Shokoufandeh, Keselman, Bretzner, and
  Dickinson]{demirci2006object}
M~Fatih Demirci, Ali Shokoufandeh, Yakov Keselman, Lars Bretzner, and Sven
  Dickinson.
\newblock Object recognition as many-to-many feature matching.
\newblock \emph{International Journal of Computer Vision}, 69\penalty0
  (2):\penalty0 203--222, 2006.

\bibitem[Duan et~al.(2012)Duan, Parikh, Crandall, and
  Grauman]{duan2012discovering}
Kun Duan, Devi Parikh, David Crandall, and Kristen Grauman.
\newblock Discovering localized attributes for fine-grained recognition.
\newblock In \emph{2012 IEEE Conference on Computer Vision and Pattern
  Recognition}, pp.\  3474--3481. IEEE, 2012.

\bibitem[Egozi et~al.(2012)Egozi, Keller, and Guterman]{egozi2012probabilistic}
Amir Egozi, Yosi Keller, and Hugo Guterman.
\newblock A probabilistic approach to spectral graph matching.
\newblock \emph{IEEE transactions on pattern analysis and machine
  intelligence}, 35\penalty0 (1):\penalty0 18--27, 2012.

\bibitem[Everingham et~al.(2007)Everingham, Van~Gool, Williams, Winn, and
  Zisserman]{everingham2007pascal}
Mark Everingham, Luc Van~Gool, Christopher~KI Williams, John Winn, and Andrew
  Zisserman.
\newblock The pascal visual object classes challenge 2007 (voc2007) results.
\newblock 2007.

\bibitem[Everingham et~al.(2010)Everingham, Gool, Williams, Winn, and
  Zisserman]{PascalVOC}
Mark Everingham, Luc Gool, Christopher~K. Williams, John Winn, and Andrew
  Zisserman.
\newblock The pascal visual object classes (voc) challenge.
\newblock \emph{IJCV}, 88\penalty0 (2):\penalty0 303–338, June 2010.
\newblock ISSN 0920-5691.

\bibitem[Festa et~al.(2002)Festa, Pardalos, Resende, and
  Ribeiro]{festa2002randomized}
Paola Festa, Panos~M Pardalos, Mauricio~GC Resende, and Celso~C Ribeiro.
\newblock Randomized heuristics for the max-cut problem.
\newblock \emph{Optimization methods and software}, 17\penalty0 (6):\penalty0
  1033--1058, 2002.

\bibitem[Fey et~al.(2018)Fey, Eric~Lenssen, Weichert, and
  M{\"u}ller]{FeyCVPR18}
Matthias Fey, Jan Eric~Lenssen, Frank Weichert, and Heinrich M{\"u}ller.
\newblock Splinecnn: Fast geometric deep learning with continuous b-spline
  kernels.
\newblock In \emph{Proceedings of the IEEE Conference on Computer Vision and
  Pattern Recognition}, pp.\  869--877, 2018.

\bibitem[Goemans \& Williamson(1995)Goemans and
  Williamson]{goemans1995improved}
Michel~X Goemans and David~P Williamson.
\newblock Improved approximation algorithms for maximum cut and satisfiability
  problems using semidefinite programming.
\newblock \emph{Journal of the ACM (JACM)}, 42\penalty0 (6):\penalty0
  1115--1145, 1995.

\bibitem[Goesele et~al.(2007)Goesele, Snavely, Curless, Hoppe, and
  Seitz]{goesele2007multi}
Michael Goesele, Noah Snavely, Brian Curless, Hugues Hoppe, and Steven~M Seitz.
\newblock Multi-view stereo for community photo collections.
\newblock In \emph{2007 IEEE 11th International Conference on Computer Vision},
  pp.\  1--8. IEEE, 2007.

\bibitem[Gold \& Rangarajan(1996)Gold and Rangarajan]{Gold1996AGA}
S.~Gold and Anand Rangarajan.
\newblock A graduated assignment algorithm for graph matching.
\newblock \emph{IEEE Trans. Pattern Anal. Mach. Intell.}, 18:\penalty0
  377--388, 1996.

\bibitem[Griffin et~al.(2007)Griffin, Holub, and Perona]{caltech256}
Gregory Griffin, Alex Holub, and Pietro Perona.
\newblock Caltech-256 object category dataset.
\newblock Technical report, California Institute of Technology, 2007.

\bibitem[Hartmanis(1982)]{hartmanis1982computers}
Juris Hartmanis.
\newblock Computers and intractability: a guide to the theory of
  np-completeness (michael r. garey and david s. johnson).
\newblock \emph{Siam Review}, 24\penalty0 (1):\penalty0 90, 1982.

\bibitem[Hu et~al.(2018)Hu, Huang, Thibert, and Guibas]{HuCVPR18}
N.~Hu, Q.~Huang, B.~Thibert, and L.~Guibas.
\newblock Distributable consistent multi-graph matching.
\newblock \emph{CVPR}, 2018.

\bibitem[Huang \& Guibas(2013)Huang and Guibas]{HuangSGP13}
Qi-Xing Huang and Leonidas Guibas.
\newblock Consistent shape maps via semidefinite programming.
\newblock In \emph{Computer Graphics Forum}, volume~32, pp.\  177--186. Wiley
  Online Library, 2013.

\bibitem[{Jiang} et~al.(2021){Jiang}, {Wang}, and {Yan}]{JiangPAMI21}
Zetian {Jiang}, Tianzhe {Wang}, and Junchi {Yan}.
\newblock Unifying offline and online multi-graph matching via finding shortest
  paths on supergraph.
\newblock \emph{TPAMI}, 43\penalty0 (10):\penalty0 3648--3663, 2021.

\bibitem[Johnson et~al.(1993)Johnson, Mehrotra, and Nemhauser]{johnson1993min}
Ellis~L Johnson, Anuj Mehrotra, and George~L Nemhauser.
\newblock Min-cut clustering.
\newblock \emph{Mathematical programming}, 62\penalty0 (1):\penalty0 133--151,
  1993.

\bibitem[Kappes et~al.(2016)Kappes, Swoboda, Savchynskyy, Hazan, and
  Schn{\"o}rr]{kappes2016multicuts}
J{\"o}rg~Hendrik Kappes, Paul Swoboda, Bogdan Savchynskyy, Tamir Hazan, and
  Christoph Schn{\"o}rr.
\newblock Multicuts and perturb \& map for probabilistic graph clustering.
\newblock \emph{Journal of Mathematical Imaging and Vision}, 56\penalty0
  (2):\penalty0 221--237, 2016.

\bibitem[Kim et~al.(2012)Kim, Li, Mitra, DiVerdi, and
  Funkhouser]{KimSiggraph12}
V.~G. Kim, W.~Li, N.~J. Mitra, S.~DiVerdi, and T.~Funkhouser.
\newblock Exploring collections of 3d models using fuzzy correspondences.
\newblock In \emph{SIGGRAPH}, 2012.

\bibitem[Koopmans \& Beckmann(1957)Koopmans and Beckmann]{KBQAP57}
T.~C. Koopmans and M.~Beckmann.
\newblock Assignment problems and the location of economic activities.
\newblock \emph{Econometrica}, pp.\  53--76, 1957.

\bibitem[Krahn et~al.(2021)Krahn, Bernard, and Golyanik]{krahn2021convex}
Maximilian Krahn, Florian Bernard, and Vladislav Golyanik.
\newblock Convex joint graph matching and clustering via semidefinite
  relaxations.
\newblock In \emph{2021 International Conference on 3D Vision (3DV)}, pp.\
  1216--1226. IEEE, 2021.

\bibitem[Krizhevsky et~al.(2012)Krizhevsky, Sutskever, and
  Hinton]{krizhevsky2012imagenet}
Alex Krizhevsky, Ilya Sutskever, and Geoffrey~E Hinton.
\newblock Imagenet classification with deep convolutional neural networks.
\newblock \emph{Advances in neural information processing systems},
  25:\penalty0 1097--1105, 2012.

\bibitem[Leonardos et~al.(2017)Leonardos, Zhou, and
  Daniilidis]{LeonardosICRA17}
Spyridon Leonardos, Xiaowei Zhou, and Kostas Daniilidis.
\newblock Distributed consistent data association via permutation
  synchronization.
\newblock In \emph{2017 IEEE International Conference on Robotics and
  Automation (ICRA)}, pp.\  2645--2652. IEEE, 2017.

\bibitem[Liu et~al.(2020)Liu, Zhang, Xie, Shen, Qian, and
  Zheng]{liu2020partial}
Weijie Liu, Chao Zhang, Jiahao Xie, Zebang Shen, Hui Qian, and Nenggan Zheng.
\newblock Partial gromov-wasserstein learning for partial graph matching.
\newblock \emph{arXiv preprint arXiv:2012.01252}, 2020.

\bibitem[Loiola et~al.(2007)Loiola, de~Abreu, Boaventura-Netto, Hahn, and
  Querido]{LoiolaEJOR07}
Eliane~Maria Loiola, Nair Maria~Maia de~Abreu, Paulo~Oswaldo Boaventura-Netto,
  Peter Hahn, and Tania Querido.
\newblock A survey for the quadratic assignment problem.
\newblock \emph{EJOR}, pp.\  657--90, 2007.

\bibitem[Manning et~al.(2008)Manning, Sch{\"u}tze, and
  Raghavan]{schutze2008introduction}
Christopher~D Manning, Hinrich Sch{\"u}tze, and Prabhakar Raghavan.
\newblock \emph{Introduction to information retrieval}, volume~39.
\newblock Cambridge University Press, 2008.

\bibitem[Ng et~al.(2002)Ng, Jordan, and Weiss]{ng2002spectral}
Andrew~Y Ng, Michael~I Jordan, and Yair Weiss.
\newblock On spectral clustering: Analysis and an algorithm.
\newblock In \emph{Advances in neural information processing systems}, pp.\
  849--856, 2002.

\bibitem[Pachauri et~al.(2013)Pachauri, Kondor, and Singh]{PachauriNIPS13}
Deepti Pachauri, Risi Kondor, and Vikas Singh.
\newblock Solving the multi-way matching problem by permutation
  synchronization.
\newblock In \emph{Advances in neural information processing systems}, pp.\
  1860--1868. Citeseer, 2013.

\bibitem[Petterson et~al.(2009)Petterson, Yu, McAuley, and
  Caetano]{petterson2009exponential}
James Petterson, Jin Yu, Julian~J McAuley, and Tib{\'e}rio~S Caetano.
\newblock Exponential family graph matching and ranking.
\newblock In \emph{Advances in Neural Information Processing Systems}, pp.\
  1455--1463, 2009.

\bibitem[Poljak \& Rendl(1995)Poljak and Rendl]{poljak1995solving}
Svatopluk Poljak and Franz Rendl.
\newblock Solving the max-cut problem using eigenvalues.
\newblock \emph{Discrete Applied Mathematics}, 62\penalty0 (1-3):\penalty0
  249--278, 1995.

\bibitem[Rand(1971)]{rand1971objective}
William~M Rand.
\newblock Objective criteria for the evaluation of clustering methods.
\newblock \emph{Journal of the American Statistical association}, 66\penalty0
  (336):\penalty0 846--850, 1971.

\bibitem[Rol{\'\i}nek et~al.(2020)Rol{\'\i}nek, Swoboda, Zietlow, Paulus,
  Musil, and Martius]{rolinek2020deep}
Michal Rol{\'\i}nek, Paul Swoboda, Dominik Zietlow, Anselm Paulus, V{\'\i}t
  Musil, and Georg Martius.
\newblock Deep graph matching via blackbox differentiation of combinatorial
  solvers.
\newblock In \emph{European Conference on Computer Vision}, pp.\  407--424.
  Springer, 2020.

\bibitem[Shen \& Davatzikos(2002)Shen and Davatzikos]{shen2002hammer}
Dinggang Shen and Christos Davatzikos.
\newblock Hammer: hierarchical attribute matching mechanism for elastic
  registration.
\newblock \emph{IEEE transactions on medical imaging}, 21\penalty0
  (11):\penalty0 1421--1439, 2002.

\bibitem[Simon et~al.(2007)Simon, Snavely, and Seitz]{simon2007scene}
Ian Simon, Noah Snavely, and Steven~M Seitz.
\newblock Scene summarization for online image collections.
\newblock In \emph{2007 IEEE 11th International Conference on Computer Vision},
  pp.\  1--8. IEEE, 2007.

\bibitem[Simonyan \& Zisserman(2014)Simonyan and Zisserman]{simonyanICLR14vgg}
Karen Simonyan and Andrew Zisserman.
\newblock Very deep convolutional networks for large-scale image recognition.
\newblock In \emph{ICLR}, 2014.

\bibitem[Swoboda \& Andres(2017)Swoboda and Andres]{swoboda2017message}
Paul Swoboda and Bjoern Andres.
\newblock A message passing algorithm for the minimum cost multicut problem.
\newblock In \emph{Proceedings of the IEEE Conference on Computer Vision and
  Pattern Recognition}, pp.\  1617--1626, 2017.

\bibitem[Swoboda et~al.(2017)Swoboda, Rother, Abu~Alhaija, Kainmuller, and
  Savchynskyy]{swoboda2017study}
Paul Swoboda, Carsten Rother, Hassan Abu~Alhaija, Dagmar Kainmuller, and Bogdan
  Savchynskyy.
\newblock A study of lagrangean decompositions and dual ascent solvers for
  graph matching.
\newblock In \emph{Proceedings of the IEEE conference on computer vision and
  pattern recognition}, pp.\  1607--1616, 2017.

\bibitem[Swoboda et~al.(2019)Swoboda, Mokarian, Theobalt, Bernard,
  et~al.]{SwobodaCVPR19}
Paul Swoboda, Ashkan Mokarian, Christian Theobalt, Florian Bernard, et~al.
\newblock A convex relaxation for multi-graph matching.
\newblock In \emph{Proceedings of the IEEE Conference on Computer Vision and
  Pattern Recognition}, pp.\  11156--11165, 2019.

\bibitem[Tian et~al.(2012)Tian, Yan, Zhang, Zhang, Yang, and
  Zha]{tian2012convergence}
Yu~Tian, Junchi Yan, Hequan Zhang, Ya~Zhang, Xiaokang Yang, and Hongyuan Zha.
\newblock On the convergence of graph matching: Graduated assignment revisited.
\newblock In \emph{European conference on computer vision}, pp.\  821--835.
  Springer, 2012.

\bibitem[Trevisan(2012)]{trevisan2012max}
Luca Trevisan.
\newblock Max cut and the smallest eigenvalue.
\newblock \emph{SIAM Journal on Computing}, 41\penalty0 (6):\penalty0
  1769--1786, 2012.

\bibitem[van Wyk \& van Wyk(2004)van Wyk and van Wyk]{WykPAMI04}
B.J. van Wyk and M.A. van Wyk.
\newblock A pocs-based graph matching algorithm.
\newblock \emph{IEEE Transaction on Pattern Analysis and Machine Intelligence},
  2004.

\bibitem[{Wang} et~al.(2020){Wang}, {Yan}, and {Yang}]{WangPAMI20}
R.~{Wang}, J.~{Yan}, and X.~{Yang}.
\newblock Combinatorial learning of robust deep graph matching: an embedding
  based approach.
\newblock \emph{IEEE TPAMI}, 2020.

\bibitem[Wang et~al.(2019)Wang, Yan, and Yang]{WangICCV19}
Runzhong Wang, Junchi Yan, and Xiaokang Yang.
\newblock Learning combinatorial embedding networks for deep graph matching.
\newblock In \emph{Proceedings of the IEEE/CVF International Conference on
  Computer Vision}, pp.\  3056--3065, 2019.

\bibitem[Wang et~al.(2020{\natexlab{a}})Wang, Yan, and Yang]{wang2020graduated}
Runzhong Wang, Junchi Yan, and Xiaokang Yang.
\newblock Graduated assignment for joint multi-graph matching and clustering
  with application to unsupervised graph matching network learning.
\newblock In \emph{NeurIPS}, 2020{\natexlab{a}}.

\bibitem[Wang et~al.(2021)Wang, Yan, and Yang]{Wang2019Neural}
Runzhong Wang, Junchi Yan, and Xiaokang Yang.
\newblock Neural graph matching network: Learning lawler’s quadratic
  assignment problem with extension to hypergraph and multiple-graph matching.
\newblock \emph{IEEE Transactions on Pattern Analysis and Machine
  Intelligence}, 2021.

\bibitem[Wang et~al.(2023)Wang, Yan, and Yang]{wang23pamiunsupervised}
Runzhong Wang, Junchi Yan, and Xiaokang Yang.
\newblock Unsupervised learning of graph matching with mixture of modes via
  discrepancy minimization.
\newblock \emph{IEEE Transactions on Pattern Analysis and Machine
  Intelligence}, 45\penalty0 (8):\penalty0 10500--10518, 2023.
\newblock \doi{10.1109/TPAMI.2023.3257830}.

\bibitem[Wang et~al.(2020{\natexlab{b}})Wang, Jiang, and Yan]{WangAAAI20}
Tianzhe Wang, Zetian Jiang, and Junchi Yan.
\newblock Clustering-aware multiple graph matching via decayed pairwise
  matching composition.
\newblock \emph{AAAI}, 2020{\natexlab{b}}.

\bibitem[Xu et~al.(2009)Xu, Li, and Schuurmans]{xu2009fast}
Linli Xu, Wenye Li, and Dale Schuurmans.
\newblock Fast normalized cut with linear constraints.
\newblock In \emph{2009 IEEE Conference on Computer Vision and Pattern
  Recognition}, pp.\  2866--2873. IEEE, 2009.

\bibitem[Yan et~al.(2015{\natexlab{a}})Yan, Wang, Zha, Yang, and
  Chu]{yan2015consistency}
J.~Yan, J.~Wang, H.~Zha, X.~Yang, and S.~Chu.
\newblock Consistency-driven alternating optimization for multigraph matching:
  A unified approach.
\newblock \emph{TIP}, 24\penalty0 (3):\penalty0 994--1009, 2015{\natexlab{a}}.

\bibitem[Yan et~al.(2015{\natexlab{b}})Yan, Cho, Zha, Yang, and Chu]{YanPAMI16}
Junchi Yan, Minsu Cho, Hongyuan Zha, Xiaokang Yang, and Stephen~M Chu.
\newblock Multi-graph matching via affinity optimization with graduated
  consistency regularization.
\newblock \emph{IEEE transactions on pattern analysis and machine
  intelligence}, 38\penalty0 (6):\penalty0 1228--1242, 2015{\natexlab{b}}.

\bibitem[Yan et~al.(2016)Yan, Yin, Lin, Deng, Zha, and Yang]{YanICMR16}
Junchi Yan, Xu-Cheng Yin, Weiyao Lin, Cheng Deng, Hongyuan Zha, and Xiaokang
  Yang.
\newblock A short survey of recent advances in graph matching.
\newblock In \emph{ICMR}, 2016.

\bibitem[Yan et~al.(2020)Yan, Yang, and Hancock]{YanIJCAI20}
Junchi Yan, Shuang Yang, and Edwin Hancock.
\newblock Learning graph matching and related combinatorial optimization
  problems.
\newblock In \emph{IJCAI}, 2020.

\bibitem[Yu et~al.(2021)Yu, Wang, Yan, and Li]{yu2021deep}
Tianshu Yu, Runzhong Wang, Junchi Yan, and Baoxin Li.
\newblock Deep latent graph matching.
\newblock In \emph{International Conference on Machine Learning}, pp.\
  12187--12197. PMLR, 2021.

\bibitem[Zanfir \& Sminchisescu(2018)Zanfir and Sminchisescu]{ZanfirCVPR18}
Andrei Zanfir and Cristian Sminchisescu.
\newblock Deep learning of graph matching.
\newblock In \emph{Proceedings of the IEEE conference on computer vision and
  pattern recognition}, pp.\  2684--2693, 2018.

\bibitem[Zhou \& Torre(2016)Zhou and Torre]{ZhouPAMI16}
F.~Zhou and F.~Torre.
\newblock Factorized graph matching.
\newblock \emph{IEEE PAMI}, 2016.

\bibitem[Zhou et~al.(2015)Zhou, Zhu, and Daniilidis]{ZhouICCV15}
Xiaowei Zhou, Menglong Zhu, and Kostas Daniilidis.
\newblock Multi-image matching via fast alternating minimization.
\newblock In \emph{Proceedings of the IEEE International Conference on Computer
  Vision}, pp.\  4032--4040, 2015.

\end{thebibliography}
\end{document}